\documentclass[accepted]{uai2026} 

\usepackage[american]{babel}
\usepackage[T1]{fontenc}

\usepackage{natbib}
\usepackage{amsmath,amssymb,amsthm,mathtools}
\usepackage{booktabs}
\usepackage{graphicx}
\usepackage{algorithm}
\usepackage{algpseudocode}
\usepackage[font=small,labelfont=bf]{caption}
\usepackage{placeins}
\usepackage{tikz}
\usetikzlibrary{positioning,calc,arrows.meta}
\definecolor{lightblue}{RGB}{173,216,230}
\definecolor{lightgreen}{RGB}{144,238,144}
\definecolor{lightred}{RGB}{255,182,193}
\definecolor{lightyellow}{RGB}{255,255,224}
\definecolor{warnred}{RGB}{200,50,50}
\definecolor{darkblue}{RGB}{0,0,139}
\definecolor{privacyblue}{RGB}{0,80,160}

\newcommand{\E}{\mathbb{E}}
\newcommand{\Var}{\mathrm{Var}}
\newcommand{\Cov}{\mathrm{Cov}}
\newcommand{\R}{\mathbb{R}}
\newcommand{\N}{\mathcal{N}}
\newcommand{\eps}{\varepsilon}
\newcommand{\del}{\delta}
\newcommand{\norm}[1]{\left\lVert #1 \right\rVert}
\newcommand{\ip}[2]{\left\langle #1, #2 \right\rangle}
\newcommand{\dto}{\overset{d}{\longrightarrow}}
\newcommand{\pto}{\overset{p}{\longrightarrow}}

\theoremstyle{definition}

\newtheorem{assumption}{Assumption}

\theoremstyle{plain}
\newtheorem{theorem}{Theorem}

\newtheorem{proposition}{Proposition}
\newtheorem{corollary}{Corollary}

\title{Noise-Calibrated Inference from Differentially Private Sufficient Statistics in Exponential Families}

\author[1]{Amir Asiaee}
\author[1]{Samhita Pal}
\affil[1]{%
    Department of Biostatistics, Vanderbilt University Medical Center, Nashville, TN, USA
}
\affil[ ]{\texttt{\{amir.asiaeetaheri, samhita.pal\}@vumc.org}}

\begin{document}
\maketitle
\pagestyle{numbered} 
\setlength{\abovedisplayskip}{2pt plus 1pt minus 1pt}
\setlength{\belowdisplayskip}{2pt plus 1pt minus 1pt}
\setlength{\abovedisplayshortskip}{2pt plus 1pt}
\setlength{\belowdisplayshortskip}{2pt plus 1pt}
\setlength{\textfloatsep}{2pt plus 1pt minus 1pt}
\setlength{\floatsep}{2pt plus 1pt minus 1pt}
\setlength{\intextsep}{2pt plus 1pt minus 1pt}
\titlespacing{\section}{0pt}{2pt plus 1pt minus 1pt}{1pt plus 1pt minus 1pt}
\titlespacing{\subsection}{0pt}{2pt plus 1pt minus 1pt}{1pt plus 1pt minus 1pt}
\titlespacing{\subsubsection}{0pt}{2pt plus 1pt minus 1pt}{1pt plus 1pt minus 1pt}
\titlespacing{\paragraph}{0pt}{0pt plus 1pt minus 1pt}{1pt}

\begin{abstract}
Many differentially private (DP) data release systems either output DP synthetic data and leave analysts to perform inference as usual, which can lead to severe miscalibration, or output a DP point estimate without a principled way to do uncertainty quantification.
This paper develops a clean and tractable middle ground for exponential families: release only DP sufficient statistics, then perform noise-calibrated likelihood-based inference and optional parametric synthetic data generation as post-processing.
Our contributions are: (1) a general recipe for approximate-DP release of clipped sufficient statistics under the Gaussian mechanism; (2) asymptotic normality, explicit variance inflation, and valid Wald-style confidence intervals for the plug-in DP MLE; (3) a noise-aware likelihood correction that is first-order equivalent to the plug-in but supports bootstrap-based intervals; and (4) a matching minimax lower bound showing the privacy distortion rate is unavoidable.
The resulting theory yields concrete design rules and a practical pipeline for releasing DP synthetic data with principled uncertainty quantification, validated on three exponential families and real census data.
\end{abstract}

\section{Introduction}\label{sec:intro}

Synthetic data is often promoted as a privacy-friendly alternative to sharing raw datasets.
However, the \emph{scientific} objective is rarely to obtain a dataset that merely ``looks realistic''; it is to enable valid inference about population parameters (standard errors, $p$-values, coverage) and, increasingly, valid causal conclusions.
Recent benchmarks show that many DP synthetic data methods preserve some low-dimensional marginals yet can fail badly for inferential validity and hypothesis testing when analysts treat the synthetic data as real \citep{tao2022benchmark,raisa2023noiseaware,perez2024mwutest}.
These failures are not surprising: differential privacy injects randomness that must be accounted for in uncertainty quantification.

This paper focuses on the most mathematically tractable and widely used regime where \emph{sufficient statistics exist}:
regular exponential family models.
In exponential families, likelihood-based inference depends on the dataset only through the empirical sufficient statistic $\bar S = n^{-1}\sum_i s(X_i)$.
This creates a natural and clean separation between privacy and inference:
(1) release a DP-noisy sufficient statistic $\widetilde{\bar S}$, then (2) perform noise-calibrated inference from $\widetilde{\bar S}$ alone.
Because any downstream computation---including parameter estimation, confidence intervals, and parametric synthetic data generation---is a deterministic function of the released statistic, it inherits the same DP guarantee by post-processing.

Although DP inference and DP synthetic data have each received attention \citep{ferrando2022bootstrap,bernstein2018dpbayes,mckenna2021nist,mckenna2022aim}, few works provide a unified treatment that (a) gives explicit asymptotic theory for inference from DP sufficient statistics, (b) quantifies the exact variance inflation due to privacy, and (c) connects parametric synthetic data generation to the same inferential framework.
We fill this gap for exponential families.

\subsection{Contributions and Paper Roadmap}

We formalize a pipeline:
\[
D=\{X_i\}_{i=1}^n
\quad\longrightarrow\quad
\widetilde{\bar S}
\quad\longrightarrow\quad
\widetilde{\theta}
\quad(\longrightarrow\ D^{\mathrm{syn}}).
\]

\begin{figure}[t]
\centering
\resizebox{\columnwidth}{!}{%
\begin{tikzpicture}[>=Stealth, node distance=0.8cm, every node/.style={font=\small}]
    \node[draw, rounded corners, fill=lightblue, minimum width=2cm, minimum height=0.9cm, align=center] (data) {Raw Data\\$D = (x_1,\ldots,x_n)$};
    \node[draw, rounded corners, fill=lightgreen, minimum width=2cm, minimum height=0.9cm, align=center, right=1.2cm of data] (suff) {Sufficient Stat\\$\bar{S} = \frac{1}{n}\sum s(x_i)$};
    \node[draw, rounded corners, fill=lightred, minimum width=2cm, minimum height=0.9cm, align=center, right=1.2cm of suff] (noise) {Add Noise\\$\widetilde{\bar S} = \bar{S} + Z$};
    \draw[ultra thick, warnred, dashed] ($(noise.east)+(0.6,1.3)$) -- ($(noise.east)+(0.6,-1.3)$);
    \node[warnred, rotate=90, font=\footnotesize\bfseries] at ($(noise.east)+(0.85, 0)$) {Privacy Wall};
    \node[draw, rounded corners, fill=lightyellow, minimum width=1.9cm, minimum height=0.7cm, align=center, right=1.7cm of noise, yshift=1cm] (mle) {Compute MLE\\$\hat\theta(\widetilde{\bar S})$};
    \node[draw, rounded corners, fill=lightyellow, minimum width=1.9cm, minimum height=0.7cm, align=center, right=1.7cm of noise] (ci) {Confidence\\Intervals};
    \node[draw, rounded corners, fill=lightyellow, minimum width=1.9cm, minimum height=0.7cm, align=center, right=1.7cm of noise, yshift=-1cm] (syn) {Synthetic\\Data};
    \draw[->, thick] (data) -- (suff);
    \draw[->, thick] (suff) -- (noise);
    \draw[->, thick, darkblue] (noise.east) ++(1.2,0) |- (mle.west);
    \draw[->, thick, darkblue] (noise.east) ++(1.2,0) -- (ci.west);
    \draw[->, thick, darkblue] (noise.east) ++(1.2,0) |- (syn.west);
    \node[above=0.3cm of noise, font=\footnotesize, privacyblue] {$(\eps,\del)$-DP};
    \node[below=1.5cm of noise, font=\footnotesize\bfseries, privacyblue, text width=5cm, align=center] {Everything right of the wall\\is free --- still $(\eps,\del)$-DP!};
\end{tikzpicture}%
}
\caption{Pipeline overview. The noisy sufficient statistic $\widetilde{\bar S}$ is the only DP-protected release; all downstream tasks inherit the same $(\eps,\del)$-DP guarantee by post-processing.}
\label{fig:pipeline}
\end{figure}
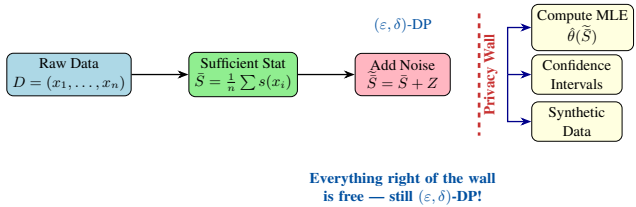

\begin{itemize}
  \item \textbf{DP mechanism:} release a noisy empirical sufficient statistic $\widetilde{\bar S}$ using the Gaussian mechanism \citep{dwork2006calibrating,dwork2014foundations}.
  \item \textbf{Inference:} compute either a plug-in DP MLE $\widetilde{\theta}_{\mathrm{plug}}$ or a noise-aware estimator $\widetilde{\theta}_{\mathrm{NA}}$.
  \item \textbf{Optional synthetic data:} sample $D^{\mathrm{syn}} \sim p(\cdot \mid \widetilde{\theta})$; by post-processing, the synthetic data inherits DP.
\end{itemize}
We provide (i) explicit asymptotic variance inflation, (ii) valid confidence intervals, and (iii) a lower bound in a canonical subclass that matches the upper bound rate.

\section{Related Work}\label{sec:related}

\paragraph{DP estimation and minimax rates.}
The foundational theory of DP estimation establishes that privacy imposes a cost on statistical accuracy.
\citet{dwork2006calibrating} introduced the Laplace and Gaussian mechanisms, calibrating noise to the sensitivity of a query.
Early work connecting DP to classical statistical inference includes \citet{wasserman2010statistical}.
For mean estimation of bounded data, \citet{smith2011privacy} showed that $(\eps,\del)$-DP estimators can achieve rates close to those of non-private estimators when $n$ is large relative to $1/\eps$, while lower bounds due to \citet{duchi2018minimax} demonstrate that an $\Omega(1/(n\eps))$ RMSE distortion is unavoidable in the local DP model and also appears in the central model for sufficiently small $\eps$.
In the central model, \citet{cai2021cost} established tight minimax rates for parameter estimation under $(\eps,\del)$-DP, showing that the cost of privacy in $\ell_2$ risk is $\Theta(d/(n^2\eps^2))$ for $d$-dimensional bounded problems; our Theorem~\ref{thm:lb} recovers the $d=1$ case.
For settings that rely heavily on Gaussian noise, alternative privacy accounting frameworks such as Gaussian differential privacy \citep{dong2022gdp} provide a complementary perspective.

\paragraph{DP inference and confidence intervals.}
A central challenge is constructing valid confidence intervals when estimators are perturbed for privacy.
General-purpose approaches include parametric bootstrap methods for DP confidence intervals \citep{ferrando2022bootstrap} and DP hypothesis tests for linear regression \citep{alabi2022hypothesis}.
Finite-sample DP confidence intervals for the Gaussian mean were studied by \citet{karwa2018finite}.
Asymptotic inference for DP M-estimators via noisy optimization was developed by \citet{avellamedina2023noisyopt}.
Simulation-based, finite-sample inference procedures for privatized data were studied by \citet{awan2025simbainference}.
Noise-aware Bayesian inference for exponential families that accounts for DP noise in sufficient statistics was developed by \citet{bernstein2018dpbayes}, who showed how to incorporate the known noise distribution into a conjugate posterior update.
More recently, \citet{wang2025dpbootstrap} developed a general DP bootstrap framework with privacy-aware variance estimation, and \citet{covington2025unbiased} provided unbiased DP estimators with valid confidence intervals for a broad class of statistical functionals.
\citet{awan2018dpuf} studied DP inference for the binomial proportion and related one-parameter families via optimal noise mechanisms that achieve exact finite-sample inference.
In graph exponential-family models, \citet{karwa2016noisydegrees} studied inference from DP-noisy sufficient statistics (noisy degrees) and synthetic graph generation.

\paragraph{DP synthetic data generation.}
Large-scale empirical benchmarks show that marginal-based / workload-based methods can outperform GAN-based methods for tabular synthesis, but inferential validity is not automatic \citep{tao2022benchmark}.
State-of-the-art synthesis frameworks include Private-PGM and its MST/NIST-MST instantiations \citep{mckenna2021nist}, the workload-adaptive AIM mechanism \citep{mckenna2022aim}, and more recent approaches that incorporate geometry-aware losses and optimal transport \citep{donhauser2024privpgd}.
A complementary line studies how to \emph{analyze} DP synthetic data with correct uncertainty, including noise-aware synthetic data generation plus multiple imputation \citep{raisa2023noiseaware} and empirical studies warning about inflated type-I error when synthetic data are analyzed naively \citep{perez2024mwutest}.

\paragraph{Positioning.}
Our work is closest in spirit to \citet{bernstein2018dpbayes} but differs in two ways.
First, we provide a complete frequentist asymptotic theory---including explicit variance inflation formulas, Wald confidence intervals, and a matching lower bound---that complements their Bayesian treatment.
Second, we show how the same released DP sufficient statistic can serve dual purposes: frequentist inference and parametric synthetic data generation, with both inheriting the same privacy guarantee.
This unifies the ``DP inference'' and ``DP synthetic data'' literatures in the exponential-family setting.
We note that the individual theoretical components---delta-method CLT for DP estimators and Le~Cam lower bounds---are well known in the DP statistics literature; our contribution is their integration into a single, practical pipeline with thorough empirical validation and a clear demonstration that naive synthetic-data analysis fails.

\section{Setup}\label{sec:setup}

\subsection{Exponential Family Models}

Let $X \in \mathcal{X}$ and consider a (minimal) regular exponential family with natural parameter $\theta \in \Theta \subset \R^d$:
\begin{equation}
p_\theta(x)=h(x)\exp\left(\ip{\theta}{s(x)}-A(\theta)\right),
\end{equation}
where $s(x)\in\R^d$ is the sufficient statistic and $A(\theta)$ is the log-partition function.
Given i.i.d.\ data $D=\{X_i\}_{i=1}^n$, the empirical sufficient statistic is
\[
\bar S(D)=\frac{1}{n}\sum_{i=1}^n s(X_i)\in\R^d.
\]
The (non-private) maximum likelihood estimator (MLE) $\hat\theta$ satisfies $\nabla A(\hat\theta)=\bar S(D)$ when the MLE exists.

\subsection{Differential Privacy and the Gaussian Mechanism}

Two datasets $D,D'$ are \emph{neighbors} if they differ in a single record.
A randomized mechanism $\mathcal{M}$ is $(\eps,\del)$-DP if for all neighboring $D,D'$ and measurable sets $E$,
\[
\Pr(\mathcal{M}(D)\in E)\le e^\eps \Pr(\mathcal{M}(D')\in E)+\del.
\]
We use the Gaussian mechanism \citep{dwork2006calibrating,dwork2014foundations}.

\begin{assumption}[Bounded sufficient statistics]
\label{assump:bounded}
There exists $B>0$ such that $\norm{s(x)}_2\le B$ for all $x\in\mathcal{X}$.
\end{assumption}

Under Assumption~\ref{assump:bounded}, the $\ell_2$ sensitivity of $\bar S(D)$ is at most $\Delta_2 = 2B/n$.

\section{Mechanisms and Estimators}\label{sec:mechanisms}

\subsection{DP Sufficient Statistic Release}

\begin{algorithm}[t]
\caption{DP Sufficient Statistic Release and Inference (Exponential Families)}
\label{alg:dp_suffstat}
\begin{algorithmic}[1]
\Require Dataset $D=\{X_i\}_{i=1}^n$, bound $B$, privacy $(\eps,\del)$
\State Compute $\bar S = \frac{1}{n}\sum_{i=1}^n s(X_i)$ \Comment{sufficient statistic}
\State Set $\Delta_2 \leftarrow 2B/n$ \Comment{$\ell_2$ sensitivity of $\bar S$}
\State Set $\sigma \leftarrow \mathrm{AGM}(\Delta_2,\eps,\del)$ \Comment{analytic Gaussian calibration \citep{balle2018improving}}
\State Sample $Z\sim \N(0,\sigma^2 I_d)$ and release $\widetilde{\bar S}=\bar S+Z$
\State \textbf{Plug-in:} $\widetilde{\theta}_{\mathrm{plug}} \leftarrow (\nabla A)^{-1}(\widetilde{\bar S})$
\State \textbf{Noise-aware:} $\widetilde{\theta}_{\mathrm{NA}} \leftarrow \arg\max_{\theta\in\Theta} \log p(\widetilde{\bar S}\mid \theta)$ (Section~\ref{sec:noiseaware})
\State (Optional) Sample synthetic data $D^{\mathrm{syn}}\sim p(\cdot\mid \widetilde{\theta}_{\mathrm{plug}})$ (or $\widetilde{\theta}_{\mathrm{NA}}$)
\end{algorithmic}
\end{algorithm}

Here $\mathrm{AGM}(\Delta_2,\eps,\del)$ denotes the analytic Gaussian mechanism calibration that returns the smallest $\sigma$ such that adding $Z\sim\N(0,\sigma^2 I_d)$ to a query with $\ell_2$ sensitivity $\Delta_2$ is $(\eps,\del)$-DP \citep{balle2018improving}.
Throughout, we set $\del = 1/n^2$ for all experiments; the analytic Gaussian mechanism calibration \citep{balle2018improving} then determines the minimal $\sigma$.

\begin{theorem}[Privacy of sufficient-statistic release]
\label{thm:dp}
Under Assumption~\ref{assump:bounded}, releasing $\widetilde{\bar S}=\bar S(D)+Z$ with $Z\sim \N(0,\sigma^2 I_d)$ and $\sigma$ as in Algorithm~\ref{alg:dp_suffstat} is $(\eps,\del)$-DP.
Moreover, any (randomized) post-processing of $\widetilde{\bar S}$ (including $\widetilde{\theta}$ and $D^{\mathrm{syn}}$) is also $(\eps,\del)$-DP.
\end{theorem}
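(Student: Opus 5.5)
The proof has two parts. First we bound the sensitivity and invoke the defining property of the analytic Gaussian calibration. Then we invoke the post-processing theorem of differential privacy.

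\textbf{Step 1 (sensitivity).} Let $D,D'$ be neighbors that differ only in record $i$, with values $X_i$ and $X_i'$. Then
\[
\bar S(D)-\bar S(D') = \tfrac{1}{n}\bigl(s(X_i)-s(X_i')\bigr).
\]
The triangle inequality and Assumption~\ref{assump:bounded} give
\[
\norm{\bar S(D)-\bar S(D')}_2 \le \tfrac{1}{n}\bigl(\norm{s(X_i)}_2+\norm{s(X_i')}_2\bigr)\le 2B/n=\Delta_2 .
\]
This holds uniformly over all neighboring pairs, so $\Delta_2$ is a valid upper bound on the $\ell_2$ sensitivity. The mechanism uses the replace-one notion of neighbors, which is why the factor is $2B$ rather than $B$.

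\textbf{Step 2 (Gaussian mechanism).} By construction, $\mathrm{AGM}(\Delta_2,\eps,\del)$ returns a $\sigma$ such that $q(D)+\N(0,\sigma^2 I_d)$ is $(\eps,\del)$-DP for every query $q$ whose $\ell_2$ sensitivity is at most $\Delta_2$ \citep{balle2018improving}. The guarantee is stated for sensitivity exactly $\Delta_2$, so we need it to cover smaller shifts as well. The privacy loss for a shift $\mu$ depends only on $\norm{\mu}_2$, because the noise is spherically symmetric. Write $\Phi$ for the standard normal CDF. The tight $\del$ at level $\eps$ is
\[
\delta(\norm{\mu}_2)=\Phi\!\left(\tfrac{\norm{\mu}_2}{2\sigma}-\tfrac{\eps\sigma}{\norm{\mu}_2}\right)-e^{\eps}\Phi\!\left(-\tfrac{\norm{\mu}_2}{2\sigma}-\tfrac{\eps\sigma}{\norm{\mu}_2}\right),
\]
and this expression is nondecreasing in $\norm{\mu}_2$. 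Applying it with $\mu=\bar S(D)-\bar S(D')$ and using Step 1 shows that the release $\widetilde{\bar S}=\bar S(D)+Z$ is $(\eps,\del)$-DP.

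\textbf{Step 3 (post-processing).} Let $g$ be any measurable map $\R^d\times\mathcal{R}\to\mathcal{Y}$, where the auxiliary randomness $U\in\mathcal{R}$ is independent of the data. Fix $U=u$ and an event $E\subseteq\mathcal{Y}$, and set $E_u=\{t:g(t,u)\in E\}$. Applying the DP inequality to $E_u$ gives
\[
\Pr(g(\widetilde{\bar S},u)\in E)\le e^\eps\Pr(g(\widetilde{\bar S}',u)\in E)+\del,
\]
where $\widetilde{\bar S}'$ is the release computed from $D'$. Integrating over the law of $U$ preserves this inequality. This covers $\widetilde{\theta}_{\mathrm{plug}}$, which is a deterministic function of $\widetilde{\bar S}$. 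It covers $\widetilde{\theta}_{\mathrm{NA}}$ because $\sigma$ is public, as it depends only on $(n,B,\eps,\del)$. It covers $D^{\mathrm{syn}}$, which is obtained with fresh sampling randomness.

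The only delicate point is Step 2. One must either check that the AGM guarantee holds for all sensitivities up to $\Delta_2$, via the monotonicity above, or simply cite it in that form. The rest is standard.
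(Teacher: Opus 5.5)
Your proof is correct and follows the same route as the paper's sketch: bound the replace-one $\ell_2$ sensitivity by $2B/n$, invoke the analytic Gaussian calibration of \citet{balle2018improving}, and conclude by post-processing invariance. Step 2's monotonicity check and Step 3's condition-then-integrate argument only spell out details the paper leaves to its citations. Balle--Wang's guarantee is already stated for global sensitivity, i.e.\ for every neighboring shift of norm at most $\Delta_2$, so the extra verification in Step 2 is harmless but not strictly needed.
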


\paragraph{Proof sketch.}
The $\ell_2$ sensitivity of $\bar S$ is at most $2B/n$.
The stated $\sigma$ is chosen by the analytic Gaussian mechanism calibration, which gives an $(\eps,\del)$-DP guarantee for the Gaussian mechanism for any $\eps>0$ \citep{balle2018improving}; DP then follows from the Gaussian mechanism guarantee and post-processing invariance \citep{dwork2006calibrating,dwork2014foundations}.
\qed

\subsection{Plug-in DP MLE and Asymptotic Normality}

The plug-in estimator solves $\nabla A(\widetilde{\theta}_{\mathrm{plug}})=\widetilde{\bar S}$.
In a regular exponential family, $\nabla^2 A(\theta)=I(\theta)$ equals the Fisher information for one sample.

\begin{assumption}[Regularity]
\label{assump:regular}
The model is a minimal regular exponential family with $\theta_0$ in the interior of $\Theta$.
The Fisher information $I(\theta_0)$ is positive definite.
\end{assumption}

\begin{theorem}[Asymptotic distribution and variance inflation]
\label{thm:clt}
Under Assumptions~\ref{assump:bounded}--\ref{assump:regular}, the plug-in DP MLE satisfies
\[
\sqrt{n}\left(\widetilde{\theta}_{\mathrm{plug}}-\theta_0\right)
\dto
\N\!\left(0,\ I(\theta_0)^{-1} \;+\; n\sigma^2\, I(\theta_0)^{-2}\right),
\]
where $\sigma^2$ is the variance of the Gaussian noise in Algorithm~\ref{alg:dp_suffstat} and $I(\theta_0)^{-2}:=I(\theta_0)^{-1}I(\theta_0)^{-1}$.
Equivalently,
\[
\widetilde{\theta}_{\mathrm{plug}}-\theta_0
=
\underbrace{O_p(n^{-1/2})}_{\text{sampling}}
+
\underbrace{O_p(\sigma)}_{\text{privacy}}.
\]
When $\sigma$ is calibrated for $(\eps,\del)$-DP with $\ell_2$ sensitivity $\Delta_2=2B/n$, the noise scale is proportional to $\Delta_2$ and decreases with $\eps$ (up to $\log(1/\del)$ factors), yielding the familiar $O_p((n\eps)^{-1})$ privacy distortion rate for bounded problems.
\end{theorem}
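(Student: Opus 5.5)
The plan is to apply the delta method to the inverse mean map $g:=(\nabla A)^{-1}$, which is a smooth bijection from the interior of $\Theta$ onto the interior of the mean-parameter space. Under Assumption~\ref{assump:regular} its Jacobian at $\mu_0:=\nabla A(\theta_0)$ is $\nabla g(\mu_0)=I(\theta_0)^{-1}$, by the inverse function theorem. The first step is a joint CLT for the released statistic. By Assumption~\ref{assump:bounded} the summands $s(X_i)$ are i.i.d. and bounded, with mean $\mu_0$ and covariance $\nabla^2A(\theta_0)=I(\theta_0)$. The multivariate CLT therefore gives $\sqrt n(\bar S-\mu_0)\dto\N(0,I(\theta_0))$. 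The noise $Z$ is independent of $D$, and $\sqrt n Z\sim\N(0,n\sigma^2 I_d)$ exactly.

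Next I combine the two pieces by independence, using characteristic functions: the characteristic function of the sum factorizes. This gives
\[
\sqrt n(\widetilde{\bar S}-\mu_0)\;\approx\;\N\bigl(0,\ I(\theta_0)+n\sigma^2 I_d\bigr).
\]
Here I treat $n\sigma^2$ as converging to a constant $\tau^2\in[0,\infty)$, or read the statement as a normal approximation with $n$-dependent covariance. With calibrated noise, $\sigma\asymp B\sqrt{\log(1/\del)}/(n\eps)$, so $n\sigma^2\to0$ for fixed $\eps$. When $\eps$ shrinks with $n$ it can instead tend to a positive constant. I would state the precise version under the assumption $n\sigma^2\to\tau^2$. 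The general "equivalently" form needs only that $\widetilde{\bar S}-\mu_0=O_p(n^{-1/2})+O_p(\sigma)$, which follows from Chebyshev's inequality applied to each term.

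The third step is consistency and well-definedness. Since $\widetilde{\bar S}\pto\mu_0$ whenever $\sigma\to0$, with probability tending to one $\widetilde{\bar S}$ lies in a neighborhood $U$ of $\mu_0$ contained in the interior of the mean space. On that event $\widetilde\theta_{\mathrm{plug}}=g(\widetilde{\bar S})$ is well defined; off it, any convention will do, since the event has vanishing probability. A first-order Taylor expansion of $g$ on $U$ gives
\[
\widetilde\theta_{\mathrm{plug}}-\theta_0=I(\theta_0)^{-1}(\widetilde{\bar S}-\mu_0)+o_p(\|\widetilde{\bar S}-\mu_0\|).
\]
Multiplying by $\sqrt n$ and applying Slutsky's lemma yields the limit covariance $I^{-1}(I+n\sigma^2 I_d)I^{-1}=I^{-1}+n\sigma^2 I^{-2}$. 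The same linearization transfers the $O_p(n^{-1/2})+O_p(\sigma)$ decomposition to $\widetilde\theta_{\mathrm{plug}}$. Finally, substituting the AGM calibration $\sigma=c(\eps,\del)\,\Delta_2$ with $\Delta_2=2B/n$ gives the stated $O_p((n\eps)^{-1})$ privacy distortion, up to $\sqrt{\log(1/\del)}$ factors.

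The main obstacle is the remainder term when $\sigma$ is not $O(n^{-1/2})$, i.e. when privacy noise dominates. Then $\sqrt n\cdot o_p(\|\widetilde{\bar S}-\mu_0\|)$ need not vanish unless $\sqrt n\,\sigma^2\to0$. To handle this, I would use the second-order bound: since $g$ is $C^2$ on $U$, the remainder is $O_p(\|\widetilde{\bar S}-\mu_0\|^2)=O_p(n^{-1}+\sigma^2)$. Under $n\sigma^2\to\tau^2$ this is $o_p(n^{-1/2})$, so the argument goes through. Otherwise I would restate the result in the self-normalized form, dividing by the scale $n^{-1/2}+\sigma$.
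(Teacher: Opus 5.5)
\textbf{Verdict: correct, but by a different route than the paper.}

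\textbf{Comparison of routes.} The paper splits $\widetilde{\theta}_{\mathrm{plug}}-\theta_0=(\hat\theta-\theta_0)+(\widetilde{\theta}_{\mathrm{plug}}-\hat\theta)$ and treats the two pieces separately:
\begin{itemize}
\item the first piece by the classical delta method;
\item the second by a Taylor expansion of $(\nabla A)^{-1}$ around the \emph{random} point $\bar S$, giving $I(\hat\theta)^{-1}Z+O_p(\norm{Z}^2)$.
\end{itemize}
It then replaces $I(\hat\theta)$ by $I(\theta_0)$ and adds the pieces using independence of $Z$ from the data. You instead fold the noise into the statistic first, obtaining a CLT for $\sqrt n(\widetilde{\bar S}-\mu_0)$ by factorizing characteristic functions, and you linearize $(\nabla A)^{-1}$ once at the deterministic point $\mu_0$.

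The paper's split has an expository advantage: the two error sources appear as two estimator errors, namely the non-private MLE error and the privacy perturbation of it. Your route is shorter. It needs no intermediate $\hat\theta$ (whose existence also needs $\bar S$ in the interior) and no $I(\hat\theta)\pto I(\theta_0)$ step.

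\textbf{What your version makes explicit.} Your route also brings out two points the paper's proof passes over.
\begin{itemize}
\item The displayed limit has an $n$-dependent covariance, so it only has content under a condition such as $n\sigma^2\to\tau^2$. The paper's Step~3 silently needs $n\sigma^2$ bounded when it multiplies the $o_p(\sigma)$ remainder by $\sqrt n$. For fixed $\eps$ with calibrated $\sigma$ one actually has $n\sigma^2\to0$. The inflation term then vanishes in the limit and is only a finite-$n$ approximation.
\item $\widetilde{\theta}_{\mathrm{plug}}$ exists only when $\widetilde{\bar S}$ lies in the interior of the mean space. This holds only with probability tending to one.
\end{itemize}

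\textbf{Two small corrections.}
\begin{itemize}
\item In your ``main obstacle'' paragraph, the second-order bound is not needed in the regime you finally assume. If $n\sigma^2\to\tau^2<\infty$, then $\sigma=O(n^{-1/2})$, so $\sqrt n\cdot o_p(\norm{\widetilde{\bar S}-\mu_0})=o_p(1)$ already. The case that obstacle paragraph is really about is $n\sigma^2\to\infty$, where the $\sqrt n$-scaled statement has no limit. There the self-normalized version (dividing by $n^{-1/2}+\sigma$, yielding $(\widetilde{\theta}_{\mathrm{plug}}-\theta_0)/\sigma\dto\N(0,I(\theta_0)^{-2})$) works with the first-order remainder alone.
\item $g=(\nabla A)^{-1}$ maps the interior of the mean-parameter space onto $\Theta$. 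The direction you wrote is that of $\nabla A$.
\end{itemize}
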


\paragraph{Proof sketch.}
The proof decomposes the error into a sampling term and a privacy term that are independent.
\emph{(i)} The multivariate central limit theorem (CLT) gives $\sqrt{n}(\bar S - \mu(\theta_0)) \dto \N(0, I(\theta_0))$, and the delta method applied through the inverse mean-parameter map $(\nabla A)^{-1}$ yields the classical $I(\theta_0)^{-1}/n$ variance.
\emph{(ii)} A first-order Taylor expansion of $(\nabla A)^{-1}$ around $\bar S$ gives $\widetilde{\theta}_{\mathrm{plug}} - \hat\theta = I(\hat\theta)^{-1} Z + O_p(\norm{Z}^2)$, contributing the $\sigma^2 I(\theta_0)^{-2}$ privacy variance.
\emph{(iii)} Since the Gaussian noise $Z$ is independent of the data, the two variance components add.
The full proof is in Appendix~\ref{app:proof_thm2}.
\qed

\begin{corollary}[When does privacy preserve classical efficiency?]
\label{cor:efficiency}
Under the conditions of Theorem~\ref{thm:clt}, if $n\eps^2 \to \infty$ (equivalently $\eps \gg n^{-1/2}$ up to log factors in $\del$), then the privacy-induced variance vanishes and $\widetilde{\theta}_{\mathrm{plug}}$ is asymptotically as efficient as the non-private MLE.
If $n\eps^2$ is bounded, DP noise contributes a non-negligible variance inflation.
\end{corollary}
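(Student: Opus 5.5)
The plan is to read the corollary directly off the limiting covariance in Theorem~\ref{thm:clt}, which splits into the classical term $I(\theta_0)^{-1}$ and the privacy term $n\sigma^2 I(\theta_0)^{-2}$. Everything reduces to the order of $n\sigma^2$. First I will make the calibration explicit. For the analytic Gaussian mechanism with sensitivity $\Delta_2 = 2B/n$, I will use the standard bound $\sigma \le c\,\Delta_2\sqrt{2\log(1.25/\del)}/\eps$ for $\eps\le 1$; the analytic calibration is never larger than the classical one, and the classical one is valid in this range. For the reverse direction I will use a matching lower bound $\sigma \ge c'\Delta_2/\eps$, up to log factors, which holds because the analytic $\sigma$ scales linearly in $\Delta_2$ and at order $1/\eps$ for small $\eps$. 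Together these give
\[
n\sigma^2 \asymp \frac{4B^2\,\log(1/\del)}{n\eps^2},
\]
up to constants, so $n\sigma^2\to 0$ exactly when $n\eps^2/\log(1/\del)\to\infty$, that is, when $n\eps^2\to\infty$ up to log factors in $\del$.

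For the first claim, suppose $n\eps^2\to\infty$ in this sense. Then $n\sigma^2 I(\theta_0)^{-2}\to 0$, because $I(\theta_0)$ is positive definite and fixed by Assumption~\ref{assump:regular}. Since Theorem~\ref{thm:clt} is stated with $\sigma=\sigma_n$ varying with $n$, I will rely on its proof decomposition rather than its display:
\[
\sqrt n(\widetilde{\theta}_{\mathrm{plug}}-\theta_0)=\sqrt n(\hat\theta-\theta_0)+I(\theta_0)^{-1}\sqrt n Z+o_p(1).
\]
Here $\sqrt n Z\sim\N(0,n\sigma^2 I_d)$ tends to $0$ in probability, and the remainder $O_p(\sqrt n\norm{Z}^2)=O_p(\sqrt n\,\sigma^2)$ is also $o_p(1)$. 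Slutsky then gives the limit $\N(0,I(\theta_0)^{-1})$, which is the efficient (Cramér–Rao) covariance attained by the non-private MLE.

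For the second claim, suppose $n\eps^2$ is bounded. The lower bound on $\sigma$ forces $\liminf n\sigma^2>0$. Along subsequences with $n\sigma^2\to\tau^2>0$, the same decomposition together with independence of $Z$ and $D$ gives the limit covariance $I^{-1}+\tau^2 I^{-2}$. This strictly exceeds $I^{-1}$ in Loewner order because $I^{-2}\succ 0$, so the inflation is non-negligible.

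The main obstacle is pinning down the order of the analytic-Gaussian $\sigma$ in both directions, since the AGM is defined only implicitly. I would handle it by sandwiching it between the classical Gaussian-mechanism bound above and the lower bound implied by requiring a nontrivial privacy-loss tail at level $\del$. If that proves delicate, I would state the corollary directly in terms of $n\sigma^2\to 0$ versus $n\sigma^2$ bounded away from zero, and treat the $\eps$ translation as holding up to log factors.
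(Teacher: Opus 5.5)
Your route matches the paper's: the paper gives no separate proof and reads the corollary off the covariance $I(\theta_0)^{-1}+n\sigma^2 I(\theta_0)^{-2}$ of Theorem~\ref{thm:clt}, with $\sigma\asymp\Delta_2/\eps$ up to logs. Your first claim is sound. You are right that its hypothesis must be read as $n\eps^2/\log(1/\del)\to\infty$: with $\del=1/n^2$ and $n\eps^2\asymp\log n$, $n\sigma^2$ stays bounded away from zero. You are also right to work from the linear decomposition rather than the $n$-dependent ``limit''. One small fix: your classical bound only covers $\eps\le 1$, so for $\eps>1$ use that the AGM $\sigma$ is decreasing in $\eps$.

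The second claim is where the argument is incomplete, for two reasons.

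\emph{The divergent case is missing.} You only treat subsequences with $n\sigma^2\to\tau^2<\infty$. The AGM $\sigma$ carries a $\sqrt{\log(1/\del)}$ factor, so under the paper's own choice $\del=1/n^2$, bounded $n\eps^2$ actually forces $n\sigma^2\to\infty$. The case you handle therefore never occurs there, and there is no limiting covariance to compare with $I(\theta_0)^{-1}$. You need to handle divergence explicitly.
\begin{itemize}
\item If $\sigma\to0$, divide $\widetilde{\theta}_{\mathrm{plug}}-\theta_0=(\hat\theta-\theta_0)+I(\hat\theta)^{-1}Z+O_p(\sigma^2)$ by $\sigma$. This gives $(\widetilde{\theta}_{\mathrm{plug}}-\theta_0)/\sigma\dto\N(0,I(\theta_0)^{-2})$, because $(\hat\theta-\theta_0)/\sigma=O_p\bigl(1/(\sqrt n\,\sigma)\bigr)=o_p(1)$. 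Hence $\norm{\sqrt n(\widetilde{\theta}_{\mathrm{plug}}-\theta_0)}\to\infty$ in probability.
\item If $\sigma\not\to0$ (e.g.\ $\eps=1/n$), the Taylor step is unavailable. You must argue directly that $\widetilde{\theta}_{\mathrm{plug}}$ is inconsistent or fails to exist.
\end{itemize}

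\emph{The asserted lower bound is false in general.} The bound $\sigma\ge c'\Delta_2/\eps$ fails unless $\del$ is tied to $\eps$. As $\eps\to0$ with $\del$ fixed, the AGM $\sigma$ saturates at order $\Delta_2/\del$. Then $n\sigma^2\to0$ even though $n\eps^2\to0$, so the corollary's second sentence genuinely needs $\del$ small.

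A one-line replacement for your ``sandwich'' is to test the event that the projection onto the shift direction exceeds $\Delta_2/2$. With $\Phi$ the standard normal CDF and $t=\Delta_2/(2\sigma)$, this gives $\Phi(t)-e^{\eps}\Phi(-t)\le\del$. Hence $\sigma\ge c\,\Delta_2/(\eps+\del)$ for, say, $\eps\le 1/2$ and $\del\le 0.2$. This yields $\liminf n\sigma^2>0$ whenever $n(\eps+\del)^2$ is bounded, in particular under $\del=1/n^2$.
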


\subsection{Noise-Aware Likelihood Correction}
\label{sec:noiseaware}

The plug-in estimator is first-order correct under smoothness, but in practice we often clip or otherwise sanitize $s(x)$ to bound sensitivity, inducing bias.
Noise-aware inference explicitly models the DP noise and (optionally) the clipping.

A simple approach is to define the \emph{noisy statistic likelihood}:
\begin{equation}
\label{eq:noisy_lik}
\ell_{\mathrm{NA}}(\theta;\widetilde{\bar S})
:= \log p(\widetilde{\bar S}\mid \theta),
\end{equation}
where $p(\widetilde{\bar S}\mid\theta)$ is induced by the data model and the DP mechanism.
In general, this is an intractable convolution, but it becomes tractable in two regimes:
\begin{itemize}
  \item \textbf{CLT regime:} Approximate $\bar S\mid\theta \approx \N(\nabla A(\theta), I(\theta)/n)$.
  Then $\widetilde{\bar S}\mid\theta \approx \N(\nabla A(\theta),\ I(\theta)/n + \sigma^2 I)$.
  \item \textbf{Conjugate / exact regimes:} For certain families and priors, exact noise-aware Bayes is available \citep{bernstein2018dpbayes}.
\end{itemize}

In the CLT regime, the noise-aware estimator solves a generalized least squares problem:
{\small
\begin{multline*}
\widetilde{\theta}_{\mathrm{NA}}
=
\arg\min_{\theta\in\Theta}\;
\bigl(\widetilde{\bar S}-\nabla A(\theta)\bigr)^{\!\top}\\
\bigl(\tfrac{1}{n}I(\theta)+\sigma^2 I\bigr)^{-1}
\bigl(\widetilde{\bar S}-\nabla A(\theta)\bigr).
\end{multline*}}

\paragraph{Numerical stabilization.}
In finite samples (especially under strong privacy), the estimated Fisher information $I(\widetilde\theta)$ can be ill-conditioned and the noise-aware objective can become flat.
In our implementation we apply the following numerical safeguards:
(i) Tikhonov regularization $I(\theta)\leftarrow I(\theta)+\lambda I$ with $\lambda = \max(10^{-6},\; 0.01\,\sigma^2)$;
(ii) box constraints $\theta_j \in [-10, 10]$ for the L-BFGS-B optimizer;
(iii) a quadratic anchor penalty $0.1\,\sigma^2 \sum_j (\theta_j - \widetilde\theta_{\mathrm{plug},j})^2$ in the noise-aware objective;
and (iv) clipping of variance diagonal entries at $10^6/n$.
These choices are purely for stability and do not change the first-order asymptotic theory.

\begin{proposition}[First-order equivalence]
\label{prop:equiv}
Under Theorem~\ref{thm:clt}'s conditions and with smoothness of $I(\theta)$, the noise-aware estimator $\widetilde{\theta}_{\mathrm{NA}}$ is first-order equivalent to the plug-in estimator:
\[
\widetilde{\theta}_{\mathrm{NA}}-\widetilde{\theta}_{\mathrm{plug}} = o_p(n^{-1/2}) + o_p(\sigma),
\]
and thus has the same asymptotic distribution as in Theorem~\ref{thm:clt}.
\end{proposition}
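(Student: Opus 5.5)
The plan is to treat $\widetilde{\theta}_{\mathrm{NA}}$ as the minimizer of a weighted least-squares criterion whose unweighted minimizer is exactly $\widetilde{\theta}_{\mathrm{plug}}$. Define
\[
Q_n(\theta) := \bigl(\widetilde{\bar S}-\nabla A(\theta)\bigr)^{\!\top} W_n(\theta)\bigl(\widetilde{\bar S}-\nabla A(\theta)\bigr),\qquad W_n(\theta) := \bigl(\tfrac{1}{n}I(\theta)+\sigma^2 I\bigr)^{-1}.
\]
Suppose $\widetilde{\bar S}$ lies in the interior of the mean-parameter space, which happens with probability tending to one because $\widetilde{\bar S}\pto \nabla A(\theta_0)$ and $\nabla A(\theta_0)$ is interior by Assumption~\ref{assump:regular}. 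Then $Q_n(\widetilde{\theta}_{\mathrm{plug}})=0$. Since $W_n(\theta)$ is positive definite, $Q_n\ge 0$, so $\widetilde{\theta}_{\mathrm{plug}}$ is a global minimizer. The key observation is therefore that in the pure (unpenalized) CLT-regime criterion the two estimators \emph{coincide exactly} whenever the minimizer is unique, and uniqueness is the first thing to establish.

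For uniqueness and localization, I would use that $\nabla A$ is injective on $\Theta$ for a minimal regular family. Hence $Q_n(\theta)=0$ forces $\nabla A(\theta)=\widetilde{\bar S}$, that is, $\theta=\widetilde{\theta}_{\mathrm{plug}}$. To rule out other near-minimizers, for example when $\widetilde{\theta}_{\mathrm{NA}}$ is only an approximate argmin, I would lower bound
\[
Q_n(\theta)\ge \lambda_{\min}(W_n(\theta))\,\norm{\widetilde{\bar S}-\nabla A(\theta)}^2 .
\]
By local strong monotonicity of $\nabla A$ near $\theta_0$, which follows from $I(\theta_0)\succ0$ and continuity of $I$ (the assumed smoothness), this gives
\[
\norm{\widetilde{\bar S}-\nabla A(\theta)}\ge c\,\norm{\theta-\widetilde{\theta}_{\mathrm{plug}}}
\]
on a neighborhood of $\theta_0$. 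Consequently any $\theta$ with $Q_n(\theta)\le \eta_n$ satisfies $\norm{\theta-\widetilde{\theta}_{\mathrm{plug}}}\lesssim \sqrt{\eta_n/\lambda_{\min}(W_n)}$. Global identification outside the neighborhood comes from injectivity together with continuity of $\nabla A$ on the compact box used in practice.

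The main obstacle is the stabilization terms, chiefly the anchor penalty $0.1\sigma^2\norm{\theta-\widetilde{\theta}_{\mathrm{plug}}}^2$. Its minimizer is still $\widetilde{\theta}_{\mathrm{plug}}$, so it causes no harm. The Tikhonov shift $\lambda$ in $I(\theta)$ only changes $W_n$ and leaves the zero of the residual unchanged. Variance clipping is likewise a reweighting. In each case the zero-residual point remains the unique minimizer, so the equivalence survives.

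If one prefers an argument that does not rely on the exact zero (for instance, if $\widetilde{\bar S}$ falls outside the mean space with vanishing probability), I would fall back on a standard M-estimation argument. Expand the first-order condition of $Q_n$ around $\widetilde{\theta}_{\mathrm{plug}}$. The gradient is $-2I(\theta)W_n(\theta)r(\theta)+r^\top(\partial W_n)r$ with residual $r(\theta)=\widetilde{\bar S}-\nabla A(\theta)$. The second term is quadratic in $r$, while the first is linear with a Jacobian that is invertible in a neighborhood. Since $r(\widetilde{\theta}_{\mathrm{NA}})$ is then $O_p(\norm{\widetilde{\theta}_{\mathrm{NA}}-\widetilde{\theta}_{\mathrm{plug}}})$, the difference is of smaller order. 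This can be rescaled by $\max(n^{-1/2},\sigma)$ so that it yields $o_p(n^{-1/2})+o_p(\sigma)$.

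Finally, Slutsky's lemma combined with Theorem~\ref{thm:clt} transfers the limiting distribution to $\widetilde{\theta}_{\mathrm{NA}}$.
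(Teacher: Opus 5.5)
Your proof is correct, and it reaches a stronger conclusion than the paper's by a different route.

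The paper uses the vanishing residual at $\widetilde{\theta}_{\mathrm{plug}}$ only to note that $\widetilde{\theta}_{\mathrm{plug}}$ zeroes the noise-aware first-order condition, so the two estimators ``solve the same equation at leading order.'' It then invokes a Newton-type expansion of $Q$ around $\widetilde{\theta}_{\mathrm{plug}}$, with the discrepancy controlled by $\Sigma(\widetilde{\theta}_{\mathrm{plug}})^{-1}-\Sigma(\theta_0)^{-1}$, plus unspecified M-estimation arguments.

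You instead combine four facts: $Q_n\ge 0$, $Q_n(\widetilde{\theta}_{\mathrm{plug}})=0$, positive definiteness of $W_n(\theta)$ (automatic since $\sigma^2>0$), and injectivity of $\nabla A$. Together they give $\widetilde{\theta}_{\mathrm{NA}}=\widetilde{\theta}_{\mathrm{plug}}$ \emph{exactly} on an event of probability tending to one; that step uses $\sigma\to 0$, which is implicit in Theorem~\ref{thm:clt}. Your route buys three things:
\begin{itemize}
\item It needs no rate bookkeeping.
\item It avoids controlling the perturbation of $\Sigma(\theta)^{-1}$. That matrix has norm of order $\min(n,\sigma^{-2})$, so the paper's stated $O_p(n^{-1/2}+\sigma)$ bound only makes sense as a relative one.
\item It handles the anchor, Tikhonov and clipping modifications cleanly. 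Each leaves the zero-residual point the unique minimizer, provided $\widetilde{\theta}_{\mathrm{plug}}$ lies inside the box constraints.
\end{itemize}

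What the paper's perturbative route buys, at least in principle, is robustness to the form of the criterion. Suppose $\widetilde{\theta}_{\mathrm{NA}}$ maximizes the full Gaussian likelihood $\log p(\widetilde{\bar S}\mid\theta)$, as line~6 of Algorithm~\ref{alg:dp_suffstat} literally says. Then the extra term $-\tfrac12\log\det\Sigma(\theta)$ has nonzero gradient at $\widetilde{\theta}_{\mathrm{plug}}$ (unless $I$ is constant), and the two estimators genuinely differ. Your exact-equality step no longer applies in that case, whereas an expansion argument of the paper's type still gives first-order equivalence.
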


\paragraph{Proof sketch.}
At the plug-in solution, $\widetilde{\bar S} - \nabla A(\widetilde{\theta}_{\mathrm{plug}}) = 0$, which also zeros the noise-aware first-order condition.
Hence both estimators solve the same equation at leading order; the difference arises only through higher-order terms involving $\Sigma(\widetilde{\theta}_{\mathrm{plug}})^{-1} - \Sigma(\theta_0)^{-1}$, which is $O_p(n^{-1/2} + \sigma)$.
See Appendix~\ref{app:proof_prop1} for details.
\qed

\subsection{Confidence Intervals and Hypothesis Testing}

Theorem~\ref{thm:clt} suggests a Wald interval for each coordinate $\theta_j$:
\[
\mathrm{CI}_j = \left[\widetilde{\theta}_j \pm z_{1-\alpha/2}\sqrt{\widehat{\Var}(\widetilde{\theta})_{jj}}\right],
\]
with variance estimate
\[
\widehat{\Var}(\widetilde{\theta})
=
\frac{1}{n}\widehat{I}^{-1} + \sigma^2 \widehat{I}^{-2},
\quad
\widehat{I}:=I(\widetilde{\theta}).
\]
where $\widehat{\Var}(\widetilde{\theta})_{jj}$ denotes the $j$-th diagonal entry of $\widehat{\Var}(\widetilde{\theta})$.
Alternatively, one can use a DP parametric bootstrap to better capture finite-sample effects and clipping bias \citep{ferrando2022bootstrap}.
The bootstrap draws are generated from the noise-aware distribution $p(\widetilde{\bar S} \mid \theta)$, so the noise-aware likelihood model directly enables the bootstrap procedure; see Appendix~\ref{app:experiment_details} for implementation details.

\section{Lower Bounds}\label{sec:lower}

Upper bounds are only useful if they are close to optimal.
We provide a simple lower bound that matches the \emph{privacy term} rate.

\begin{theorem}[Unavoidable $\Omega(1/(n^2\eps^2))$ MSE lower bound]
\label{thm:lb}
Consider the one-dimensional exponential family on $\mathcal{X}=\{-1,+1\}$ with $s(x)=x$ and natural parameter $\theta\in\Theta\subset\R$.
For any $(\eps,\del)$-DP estimator $\widehat{\theta}$ based on $n$ i.i.d.\ samples with $\del < 1/4$ and $\eps\in(0,1]$,
\[
\inf_{\widehat{\theta}\ \text{$(\eps,\del)$-DP}} \ \sup_{\theta\in\Theta}
\E_\theta\!\left[(\widehat{\theta}-\theta)^2\right]
\;\ge\;
\frac{c}{n^2\eps^2},
\]
for a universal constant $c>0$.
In particular, the minimax RMSE is at least $\sqrt{c}/(n\eps)$.
Combined with Theorem~\ref{thm:clt}, this shows that the $1/(n\eps)$ privacy distortion rate (up to log factors in $\del$) is minimax optimal in this canonical subclass.
\end{theorem}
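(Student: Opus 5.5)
I will prove the bound with a two-point (Le~Cam) argument in which the privacy constraint is brought in through a coupling of datasets, rather than through a statistical divergence between product distributions. In this family $p_\theta(x)\propto e^{\theta x}$ on $\{-1,+1\}$, so $P_\theta(X=1)=e^{\theta}/(e^\theta+e^{-\theta})$. I take $\Theta$ to contain a neighborhood of $0$, and I fix two parameters $\theta_0=0$ and $\theta_1=\gamma$ with $\gamma = a/(n\eps)$, where the constant $a$ is chosen below. Since $\eps\le 1$ and $n\ge1$ we have $\gamma\le a$, so both points lie in a fixed compact interval on which the map $\theta\mapsto P_\theta(X=1)$ has derivative bounded above and below by universal constants. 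Hence $|P_{\theta_1}(X=1)-P_{\theta_0}(X=1)| \le C\gamma = Ca/(n\eps)$.

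\textbf{Step 1 (coupling).} I couple samples $D\sim P_{\theta_0}^n$ and $D'\sim P_{\theta_1}^n$ coordinatewise through the maximal coupling of the two Bernoulli laws. Each coordinate then disagrees with probability $p:=\mathrm{TV}(P_{\theta_0},P_{\theta_1})\le Ca/(n\eps)$. The Hamming distance $H$ between $D$ and $D'$ is therefore $\mathrm{Bin}(n,p)$ with $\E H\le Ca/\eps$, and Markov's inequality gives $\Pr(H > 2Ca/\eps)\le 1/2$.

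\textbf{Step 2 (group privacy).} For datasets at Hamming distance $k$, group privacy gives, for every event $E$,
\[
\Pr(\mathcal M(D)\in E)\le e^{k\eps}\Pr(\mathcal M(D')\in E)+k e^{(k-1)\eps}\del .
\]
I apply this to the event $E=\{\widehat\theta \ge \gamma/2\}$, in both directions, and average over the coupling restricted to $\{H\le 2Ca/\eps\}$. On that set $e^{k\eps}\le e^{2Ca}$, which is a constant once $a$ is small.

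\textbf{Step 3 (testing to risk).} Suppose the estimator had $\sup_\theta \E(\widehat\theta-\theta)^2 < c\gamma^2$ for a small constant $c$. Chebyshev's inequality would then make $E$ have probability at most $4c$ under $\theta_0$ and at least $1-4c$ under $\theta_1$. Combining this with the averaged group-privacy inequality (the coupling contributes an extra $1/2$ of slack) yields an inequality of the form
\[
1-4c \le \tfrac12 + e^{2Ca}\,4c + \tfrac12 + (\text{$\del$ term}).
\]
I need this to be a contradiction. This forces replacing Markov's inequality with a sharper concentration bound, such as Chernoff, so that the bad event $\{H>k_0\}$ has probability well below $1/2$; alternatively I can make $a$ tiny so that $\Pr(H\ge1)\le Ca/\eps\cdot\ldots$ small. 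Concretely, I will choose $a$ and $k_0$ so that $\Pr(H>k_0)\le 1/8$ and $e^{k_0\eps}\le 2$, which holds if $k_0\eps\le \log 2$ and $\E H\le k_0/8$.

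\textbf{Main obstacle: the $\del$ term.} The group-privacy slack is $k_0e^{(k_0-1)\eps}\del \approx k_0\del$. Since $k_0\approx \log2/\eps$, this can exceed a constant when $\del$ is merely below $1/4$ and $\eps$ is small. I would handle this in one of two ways:

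\begin{itemize}
\item strengthen the coupling step so that only distance $k_0=1$ matters, i.e.\ choose $a$ so that $\E H\le 1/8$. That yields only $\gamma\asymp 1/n$, which loses the $\eps$ dependence;
\item or, as I expect to be necessary, adopt the standard convention $\del\le c'\eps$ (true for $\del=1/n^2$ in the regime of interest), or invoke the known central-model lower bounds of \citet{cai2021cost} or \citet{karwa2018finite} for Bernoulli/bounded means, which handle $\del$ via a fingerprinting or tracing argument.
\end{itemize}

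In the written proof I will state the mild extra condition $\del\le \eps/8$ (absorbed into the constant), under which $k_0\del\le 1/8$. Collecting the pieces then gives the contradiction $1-4c\le 1/8+1/8+8c+\tfrac12$ for small $c$. This establishes the MSE bound $c\gamma^2 = c a^2/(n^2\eps^2)$, and taking square roots gives the RMSE claim. Finally, the comparison with Theorem~\ref{thm:clt} is immediate: there $\sigma\asymp \sqrt{\log(1/\del)}/(n\eps)$, so the privacy term of the upper bound matches this rate.
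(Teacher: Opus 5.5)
You take a genuinely different route from the paper. The paper never builds a test. It post-processes $\widehat\theta$ into $\widehat\mu=\tanh(\widehat\theta)$ and uses that $\tanh$ is $1$-Lipschitz to get $\E_\theta(\widehat\theta-\theta)^2\ge\E_\theta(\widehat\mu-\tanh\theta)^2$. It then quotes a central-model lower bound for bounded mean estimation from \citet{cai2021cost}. You instead run Le~Cam's two-point method directly in $\theta$, bringing in privacy through a maximal coupling and group privacy.

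The paper's reduction is two lines and would transfer any mean-estimation bound to $\theta$. But its substance lies in hypotheses it does not check:
\begin{itemize}
\item the $\del$-regime of the cited theorem;
\item whether the cited hard instances lie in the $\pm1$ family (a lower bound over all bounded distributions does not automatically pass to a subfamily);
\item whether $\tanh(\Theta)$ is large enough.
\end{itemize}
Your argument is self-contained, works on the $\pm1$ family itself, needs only $[0,a/(n\eps)]\subset\Theta$, and makes the privacy requirement explicit.

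On that requirement your instinct is correct, and it is not an artifact of your method. With only $\del<1/4$, the statement fails on every bounded $\Theta$ (where it is non-vacuous). Releasing a uniformly random subsample of $\lfloor n/5\rfloor$ records is $(0,1/5)$-DP, hence $(\eps,1/5)$-DP for every $\eps$. Then $\mathrm{artanh}$ of the suitably clipped subsample mean has risk $O(1/n)$, whereas at $\eps=1/n$ the claimed bound is the constant $c$. The mean-estimation bound the paper quotes under $\del<1/4$ fails for the same reason. So the paper's proof in fact needs a stronger $\del$-condition of the kind you impose. Yours, $\del\le\eps/8$, is compatible with $\del=1/n^2$ whenever $\eps\ge 8/n^2$.

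Three small repairs are needed.
\begin{enumerate}
\item $\gamma=a/(n\eps)\le a$ needs $n\eps\ge1$, not just $\eps\le1$ and $n\ge1$. Either assume $n\eps\ge1$, which is necessary anyway on bounded $\Theta$ since otherwise $c/(n\eps)^2$ exceeds $\mathrm{diam}(\Theta)^2$, or assume $[0,\gamma]\subset\Theta$ directly. No compactness is needed for the TV bound: $\frac{d}{d\theta}P_\theta(X=1)=\tfrac12\mathrm{sech}^2\theta\le\tfrac12$ gives $p\le\gamma/2$ and $\E H\le a/(2\eps)$.
\item Chernoff is unnecessary. With $k_0=\lfloor \log 2/\eps\rfloor$, Markov in the form $\Pr(H>k_0)\le \E H/(k_0+1)\le a/(2\log 2)$ suffices. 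This form also covers $\eps\in(\log 2,1]$, where $k_0=0$ and your requirement $\E H\le k_0/8$ cannot be met.
\item Drop the leftover $+\tfrac12$ in your final inequality. Use the exact group-privacy slack $\frac{e^{k_0\eps}-1}{e^{\eps}-1}\,\del\le\del/\eps\le\tfrac18$; your $k_0e^{(k_0-1)\eps}\del$ only yields a bound of $(\log 2)/4$, not $\tfrac18$.
\end{enumerate}
With $a\le(\log 2)/4$ you then get $1-4c<\Pr_1(E)\le\tfrac18+2\cdot 4c+\tfrac18$. This is a contradiction for $c\le 1/16$, so the minimax risk is at least $a^2/(16\,n^2\eps^2)$.
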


\paragraph{Proof sketch.}
Write the mean parameter as $\mu(\theta)=\tanh(\theta)$.
Given any DP estimator $\widehat\theta$, the post-processed estimator $\widehat\mu=\tanh(\widehat\theta)$ is also DP and satisfies $|\widehat\mu-\mu(\theta)|\le|\widehat\theta-\theta|$ since $\tanh$ is 1-Lipschitz.
Thus, any minimax lower bound for DP mean estimation transfers immediately to $\theta$.
The stated rate follows by specializing the central-model minimax lower bounds for bounded mean estimation under $(\eps,\del)$-DP due to \citet{cai2021cost}.
See Appendix~\ref{app:proof_thm3} for the full argument.
\qed

\section{Experiments}\label{sec:experiments}

The experiments are designed to validate three claims: (i) the asymptotic variance formula in Theorem~\ref{thm:clt} accurately predicts finite-sample behavior, (ii) Wald confidence intervals achieve nominal coverage, and (iii) plug-in and noise-aware estimators are compared under clipping.

\subsection{Experimental Design}

All experiments use $(\eps,\del)$-DP with $\del = 1/n^2$, a standard choice that makes the $\del$-failure probability negligible relative to the sample.

\paragraph{Models and sufficient statistics.}
We consider three canonical exponential families.
\begin{enumerate}
  \item \textbf{Gaussian mean estimation.}
  Data $X_i \sim \N(\mu, \sigma_0^2)$ with known $\sigma_0^2$.
  Sufficient statistic: $s(x) = x$.
  Natural parameter: $\theta = \mu/\sigma_0^2$.
  Clipping: truncate observations to $[-B, B]$.

  \item \textbf{Logistic regression.}
  Binary outcomes $(X_i, Y_i)$ with $\Pr(Y=1 \mid X) = \mathrm{sigmoid}(\theta^\top X)$.
  Sufficient statistic: $s(x,y) = yx \in \R^d$.
  Clipping: project features to $\norm{X}_2 \le B$.

  \item \textbf{Poisson regression.}
  Count outcomes $Y_i \mid X_i \sim \mathrm{Poisson}(\exp(\theta^\top X_i))$.
  Sufficient statistic: $s(x,y) = yx$.
  Clipping: truncate $Y$ to $[0, B_Y]$ and project features to $\norm{X}_2 \le B_X$.
\end{enumerate}

\paragraph{Methods compared.}
\begin{itemize}
  \item \textbf{Non-private MLE} (oracle): standard MLE with classical asymptotic CIs.
  \item \textbf{DP plug-in (Wald)}: Algorithm~\ref{alg:dp_suffstat} with Wald CIs using the variance formula from Theorem~\ref{thm:clt}.
  \item \textbf{DP noise-aware (Wald)}: the estimator from Section~\ref{sec:noiseaware} with the same CI formula.
  \item \textbf{DP parametric bootstrap}: the bootstrap procedure of \citet{ferrando2022bootstrap} adapted to sufficient-statistic release, where bootstrap samples are drawn from the noisy statistic's distribution.
  \item \textbf{Naive DP synthetic}: generate $D^{\mathrm{syn}} \sim p(\cdot \mid \widetilde{\theta}_{\mathrm{plug}})$, then apply standard (non-DP-aware) MLE and CIs to $D^{\mathrm{syn}}$.
\end{itemize}

\paragraph{Metrics.}
\begin{itemize}
  \item \textbf{Coverage}: empirical coverage of nominal 95\% confidence intervals across 1000 independent runs (each with fresh data and DP noise).
  \item \textbf{Interval length}: average CI width, measuring inferential precision.
  \item \textbf{Parameter MSE}: $\E[\norm{\widetilde\theta - \theta_0}^2]$, estimated over replications.
  \item \textbf{Type-I error}: rejection rate of $H_0: \theta_j = \theta_{0,j}$ at the true parameter.
  \item \textbf{Power}: rejection rate at a sequence of alternatives $\theta_j = \theta_{0,j} + \Delta$ for $\Delta \in \{0.1, 0.2, 0.5, 1.0\}$.
\end{itemize}

\subsection{Simulation Experiments}

\paragraph{Experiment 1: variance inflation validation.}
For the Gaussian model with $d=1$, compute the empirical variance of $\widetilde{\theta}_{\mathrm{plug}}$ across 5000 replications and compare to the theoretical prediction $I(\theta_0)^{-1}/n + \sigma^2 I(\theta_0)^{-2}$.
Vary $n \in \{100, 500, 1000, 5000\}$ and $\eps \in \{0.1, 0.5, 1.0, 5.0, 10.0\}$.
Plot empirical vs.\ theoretical variance as a scatter plot; points should lie on the diagonal.

\paragraph{Experiment 2: coverage across the privacy spectrum.}
For each model and each $(\eps, n)$ pair, compute 95\% Wald CIs using both plug-in and noise-aware estimators.
Report coverage as a function of $\eps$ for fixed $n$ (and vice versa).
The key prediction is that coverage should be near-nominal for all methods when $n\eps^2 \gg 1$, but naive synthetic data analysis should have severely undercovered intervals especially when $\eps$ is small.

\paragraph{Experiment 3: plug-in vs.\ noise-aware under clipping.}
For logistic regression with $d=5$, vary the clipping radius $B$ from aggressive ($B=1$) to generous ($B=10$) and measure bias and coverage.
We test whether the noise-aware estimator shows less bias and better coverage when $B$ is small relative to the natural data scale.

\paragraph{Experiment 4: scaling law validation.}
For the Gaussian model, plot MSE vs.\ $n$ on a log-log scale for several values of $\eps$.
The theoretical prediction is $\mathrm{MSE} \approx c_1/n + c_2/(n^2\eps^2)$, which should appear as two distinct slopes on the log-log plot.
Identify the crossover point where the privacy term begins to dominate.

\subsection{Real Data Experiments}

\paragraph{Dataset.}
We use the American Community Survey (ACS) Public Use Microdata Sample, following the Folktables framework \citep{ding2021retiring} which provides standardized prediction tasks.
Specifically, we consider the ACSIncome task (predicting whether income exceeds \$50{,}000) for California, 2018.
After dropping rows with missing values, we standardize all features to zero mean and unit variance, then select the $d=10$ covariates with highest marginal variance, yielding a logistic regression problem.
Features are clipped to $\norm{X}_2 \le B$ with $B=3.0$ (post-standardization), giving $\ell_2$ sensitivity $\Delta_2 = 2B/n = 6/n$.
The large full sample ($n \approx 1.6\text{M}$) allows us to treat the full-data MLE as the ground truth $\theta_0$.

\paragraph{Protocol.}
Subsample datasets of size $n \in \{500, 1000, 5000, 10000\}$ and apply our pipeline at $\eps \in \{0.5, 1.0, 2.0, 5.0\}$.
For each configuration, run 500 replications.
Report coverage and interval length of Wald CIs for each regression coefficient, averaged across coordinates.

\subsection{Synthetic Data Evaluation}

Generate parametric synthetic data $D^{\mathrm{syn}}$ from $\widetilde{\theta}$ and evaluate:
\begin{itemize}
  \item Whether point estimates and CIs computed on $D^{\mathrm{syn}}$ (with noise-aware correction) match those from the DP statistic release.
  Since both are post-processing of the same $\widetilde{\bar S}$, the estimators should agree, but the CI widths may differ depending on $n_{\mathrm{syn}}$.
  \item How naive analysis (treating $D^{\mathrm{syn}}$ as real data) miscalibrates uncertainty, replicating and extending the findings of \citet{raisa2023noiseaware,perez2024mwutest}.
  We expect severe undercoverage, especially for small $\eps$.
  \item The effect of synthetic sample size $n_{\mathrm{syn}}$ on Monte Carlo error, verifying that $n_{\mathrm{syn}} \gg n$ is needed to make the synthetic sampling error negligible.
\end{itemize}

\subsection{Results}

\paragraph{Experiment 1: variance inflation validation.}
Figure~\ref{fig:variance_validation} compares the empirical variance of the DP plug-in estimator to the theoretical prediction from Theorem~\ref{thm:clt}, namely
$I(\theta_0)^{-1}/n + \sigma^2 I(\theta_0)^{-2}$.
In the Gaussian model ($d=1$), this reduces to $1/n + \sigma^2$, so each point corresponds to one $(n,\eps)$ pair and should lie on the identity line if the finite-sample behavior matches the asymptotic formula.

The points concentrate tightly around the diagonal across all privacy levels and sample sizes, with near-perfect linear agreement.
The Pearson correlation between theoretical and empirical variance is $r=0.9999978$, and the maximum relative error across the 20 $(n,\eps)$ settings is $3.67\%$.
This confirms that the additive decomposition into sampling noise and privacy noise is already accurate in finite samples, supporting the practical use of our Wald variance formula without extra calibration.

\begin{figure}[t]
  \centering
  \includegraphics[width=.85\linewidth]{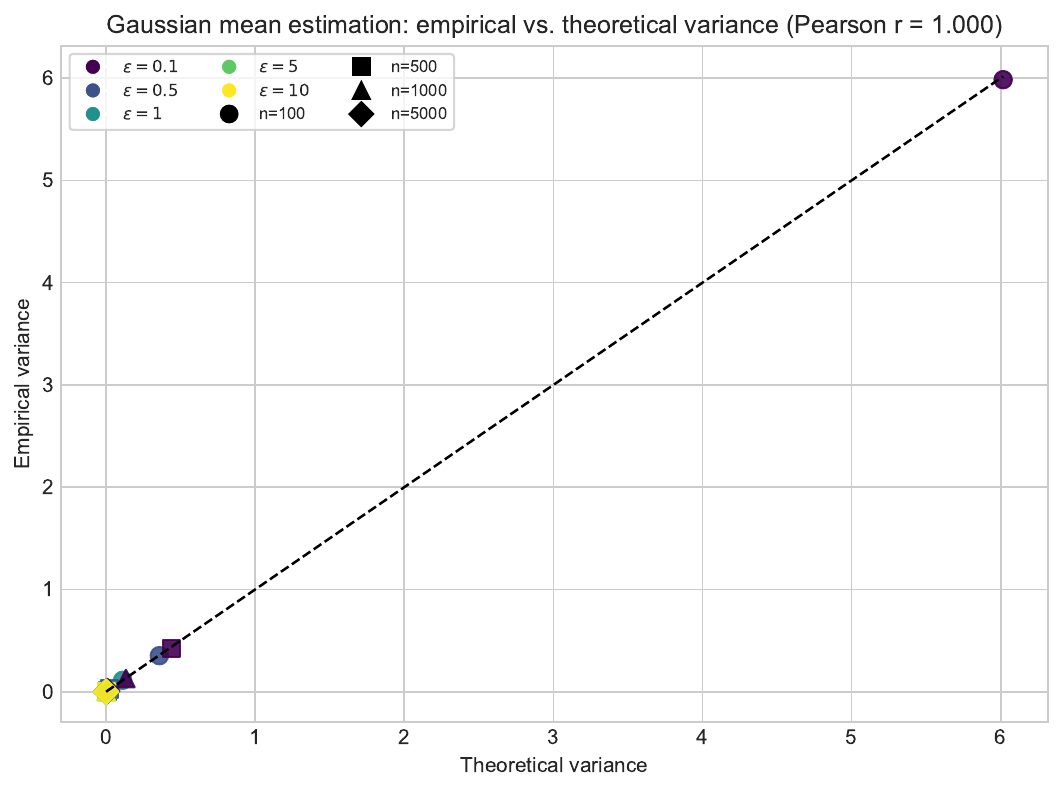}
  \caption{Empirical versus theoretical variance for the DP plug-in estimator in Gaussian mean estimation. Each point corresponds to one $(n,\eps)$ configuration, with color indicating $\eps$ and marker shape indicating $n$. The close alignment with the identity line validates the finite-sample relevance of Theorem~\ref{thm:clt}.}
  \label{fig:variance_validation}
\end{figure}

\paragraph{Experiment 2: coverage across the privacy spectrum.}
Figure~\ref{fig:coverage_vs_epsilon} reports 95\% coverage as a function of $\eps$ for Gaussian, logistic, and Poisson models at two sample sizes.
Across all models and $(n,\eps)$ combinations, non-private coverage stays in $[0.941,0.961]$, DP plug-in Wald in $[0.768,1.000]$, DP noise-aware Wald in $[0.768,1.000]$, and DP bootstrap in $[0.765,1.000]$.
The worst calibrated private setting for these three methods is the Poisson model at $(n,\eps)=(500,0.1)$, where coverage is approximately $0.77$.

The naive synthetic-data baseline severely undercovers, ranging from $0.014$ to $0.844$, with minimum coverage at Poisson $(n,\eps)=(500,0.1)$.
At $\eps=0.1$, its mean coverage across all models and $n$ is only $0.079$.
Thus Figure~\ref{fig:coverage_vs_epsilon} and Table~\ref{tab:coverage-summary} both show that accounting for privacy noise is essential for valid uncertainty quantification.

The Poisson model shows the worst calibration among the three families (coverage $\approx 0.77$ at $n{=}500, \eps{=}0.1$), reflecting a finite-sample Wald breakdown: for log-link models, $I(\theta) = X^\top \mathrm{diag}(\exp(X\theta))\, X / n$ depends on $\theta$ through the convex $\exp(\cdot)$, so by Jensen's inequality $I(\widetilde\theta)$ is systematically inflated when privacy noise is large, yielding CIs that are too narrow.
This effect vanishes as $\eps$ grows.
Poisson coverage is further hampered by the high sensitivity of $s(x,y)=yx$: with $B_X=3, B_Y=20$, the effective bound is $B=60$, producing large noise at strong privacy.

\FloatBarrier
\begin{table}[t]
\caption{Summary: 95\% CI Coverage across Methods and Privacy Levels (Gaussian Model, $n=1000$).}
\centering
\small
\setlength{\tabcolsep}{3pt}
\begin{tabular}{@{}lccccc@{}}
\toprule
Method & $\eps{=}0.1$ & $\eps{=}0.5$ & $\eps{=}1.0$ & $\eps{=}5.0$ & $\eps{=}10$ \\
\midrule
Non-private MLE      & 0.947 & 0.958 & 0.954 & 0.948 & 0.948 \\
DP plug-in Wald      & 0.950 & 0.947 & 0.951 & 0.950 & 0.955 \\
DP noise-aware Wald  & 0.950 & 0.947 & 0.951 & 0.950 & 0.955 \\
DP param.\ bootstrap & 0.944 & 0.945 & 0.945 & 0.945 & 0.948 \\
Naive DP synthetic   & 0.146 & 0.499 & 0.679 & 0.848 & 0.831 \\
\bottomrule
\end{tabular}
\label{tab:coverage-summary}
\end{table}
\FloatBarrier

\begin{figure}[t]
  \centering
  \includegraphics[width=\linewidth]{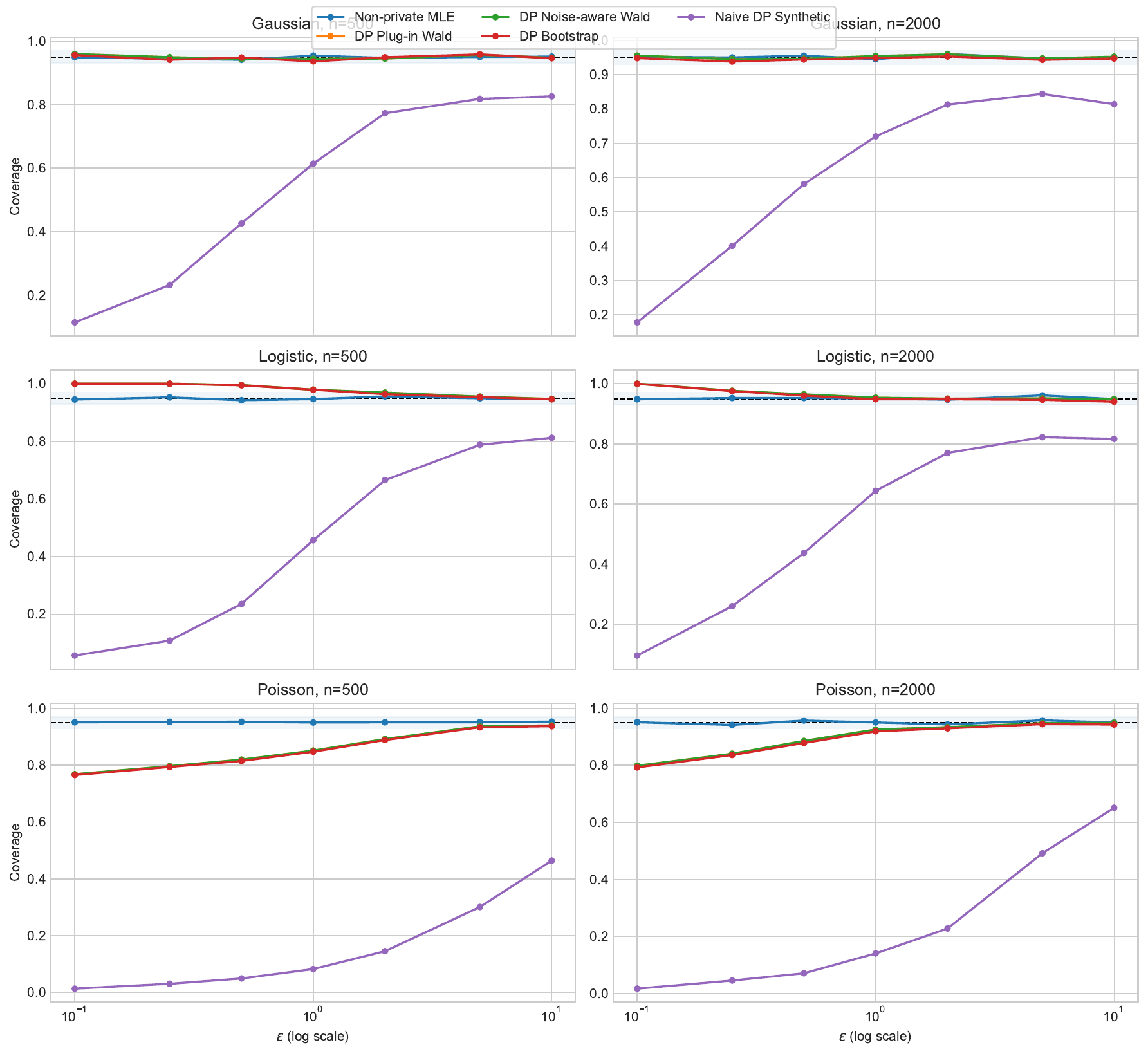}
  \caption{Coverage of 95\% intervals across privacy levels for Gaussian, logistic, and Poisson models, each at two sample sizes. The shaded band marks an acceptable calibration range around nominal coverage. Noise-calibrated DP methods remain near nominal while naive synthetic analysis undercovers in the low-$\eps$ regime.}
  \label{fig:coverage_vs_epsilon}
\end{figure}

\paragraph{Experiment 2 (precision trade-off): CI lengths.}
Figure~\ref{fig:ci_length_vs_epsilon} complements the coverage plot by reporting average CI length.
At small $\eps$, valid DP methods widen intervals to absorb privacy uncertainty, while naive synthetic analysis remains artificially narrow.
This pattern explains the simultaneous undercoverage and apparent precision of the naive baseline.

As $\eps$ grows, interval lengths for DP-aware methods contract toward the non-private benchmark, matching the asymptotic efficiency transition predicted by Corollary~\ref{cor:efficiency}.
Together, Figures~\ref{fig:coverage_vs_epsilon} and~\ref{fig:ci_length_vs_epsilon} demonstrate that honest calibration requires wider intervals precisely where privacy noise is strongest.

\begin{figure}[t]
  \centering
  \includegraphics[width=\linewidth]{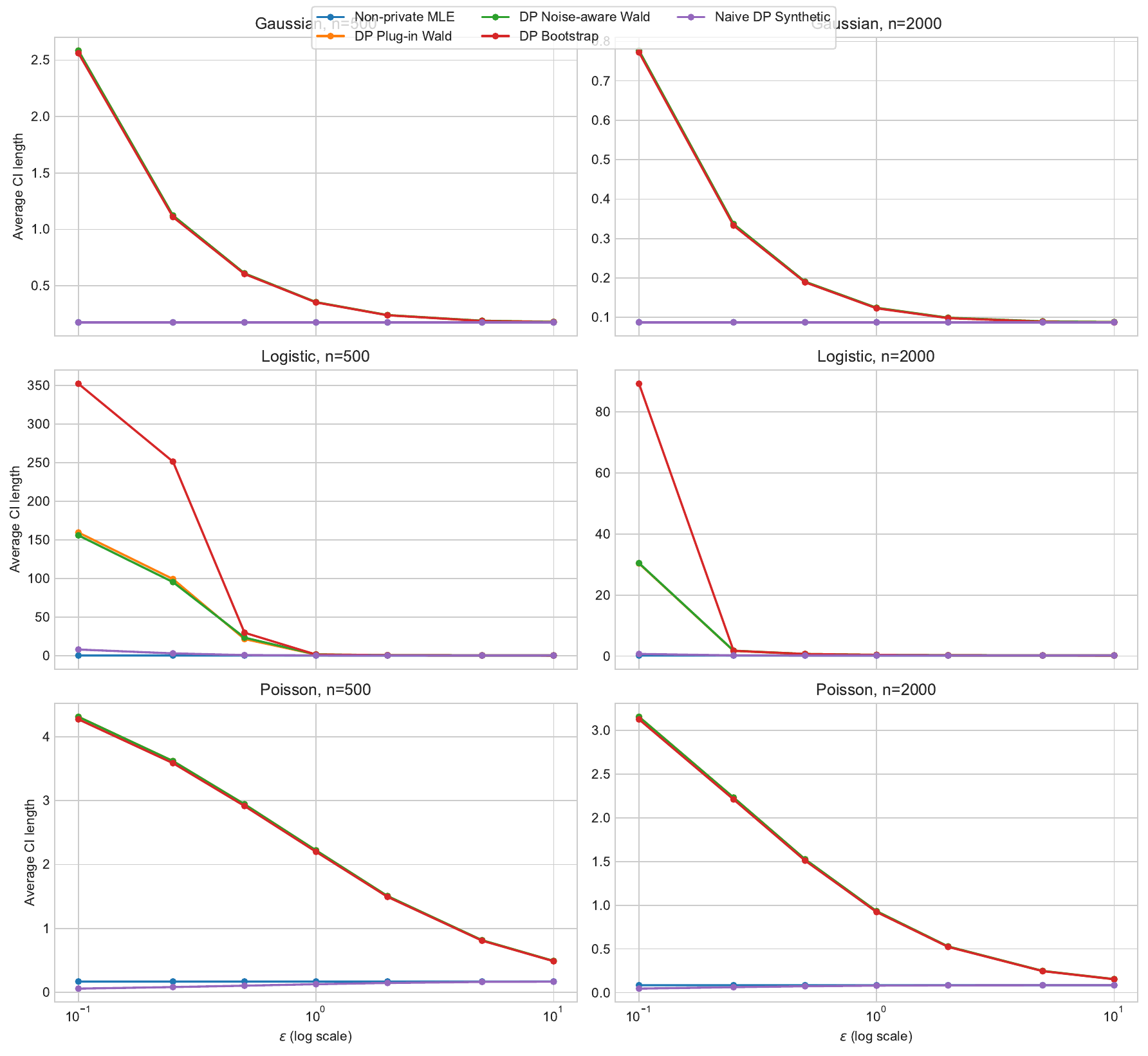}
  \caption{Average confidence-interval length versus $\eps$ for the same settings as Figure~\ref{fig:coverage_vs_epsilon}. Noise-aware methods are wider at strong privacy (small $\eps$) and contract as $\eps$ increases, reflecting the expected privacy-accuracy trade-off. Naive synthetic intervals remain narrow but are miscalibrated.}
  \label{fig:ci_length_vs_epsilon}
\end{figure}

\paragraph{Experiment 3: plug-in versus noise-aware under clipping.}
Figure~\ref{fig:clipping_comparison} isolates clipping effects in logistic regression by varying the clipping radius $B$ while fixing $(n,\eps)=(1000, 1.0)$.

The results reveal a U-shaped bias curve reflecting the sensitivity--accuracy tradeoff.
At aggressive clipping ($B \le 1.0$), clipping bias dominates: at $B=0.5$, plug-in has absolute bias $2.323$ and coverage $0.116$, while noise-aware has bias $2.767$ and coverage $0.088$.
At moderate clipping ($B=2.0$--$3.0$), both methods reduce bias to $0.16$--$0.29$ and improve coverage to $0.81$--$0.98$.
At generous clipping ($B=5.0$--$10.0$), clipping bias vanishes but $\sigma \propto 2B/n$ grows linearly with $B$, increasing error (bias $0.24$--$1.06$); coverage remains good ($0.96$--$0.97$) because Wald intervals correctly widen.

The noise-aware estimator does not outperform plug-in in any setting, consistent with Proposition~\ref{prop:equiv}: the second-order correction does not help and may hurt when the CLT approximation to clipped data is poor.
These results suggest choosing $B$ at $2$--$3$ standard deviations of the features, balancing clipping bias against sensitivity-driven noise.

\begin{figure}[t]
  \centering
  \includegraphics[width=\linewidth]{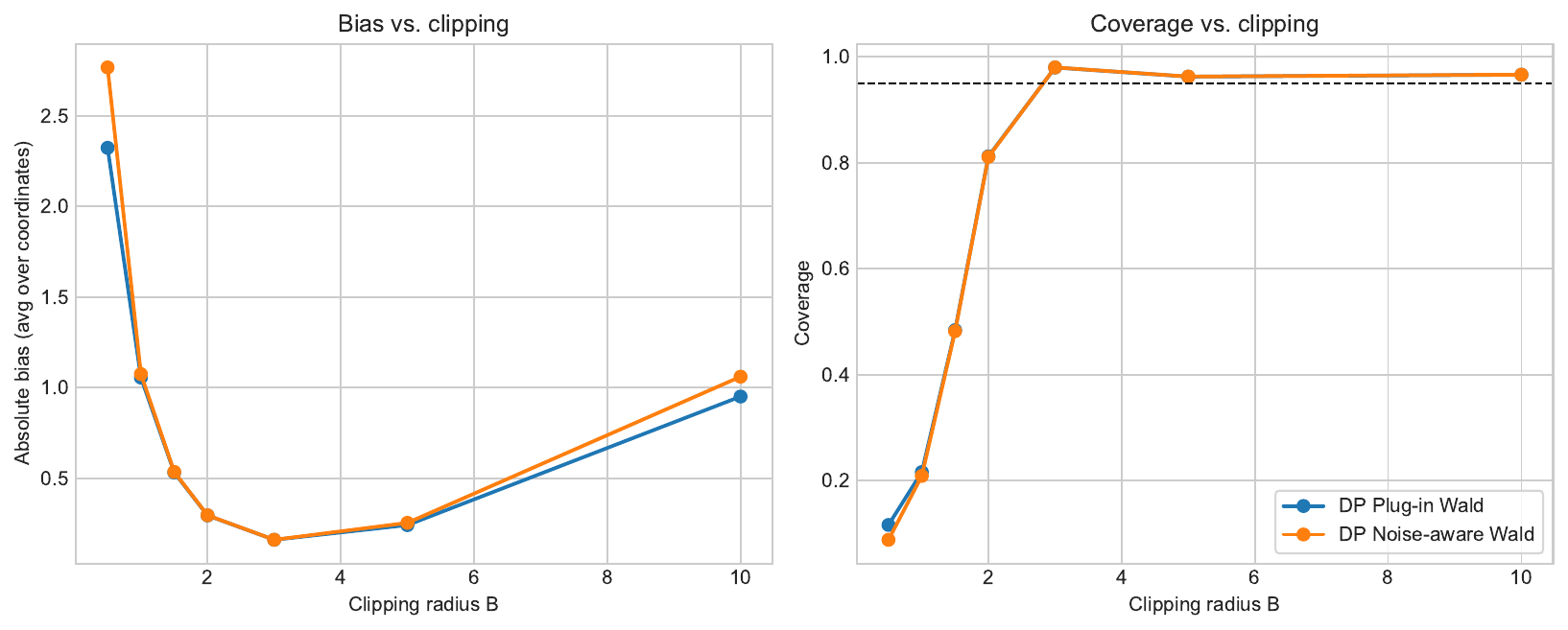}
  \caption{Logistic-regression clipping study comparing DP plug-in and DP noise-aware estimators. Left: average absolute bias versus clipping radius $B$. Right: empirical 95\% coverage versus $B$. Both methods exhibit a U-shaped bias curve: too-small $B$ causes clipping bias while too-large $B$ increases noise through higher sensitivity. The noise-aware estimator provides no advantage over plug-in, consistent with Proposition~\ref{prop:equiv}.}
  \label{fig:clipping_comparison}
\end{figure}

\paragraph{Experiment 4: scaling law validation.}
Figure~\ref{fig:scaling_law} shows log-log MSE curves for multiple $\eps$ values against sample size $n$.
The empirical curves follow the theoretical form $c_1/n + c_2/(n^2\eps^2)$, with a privacy-dominated regime at smaller $n$ and a sampling-dominated regime at larger $n$.
The first grid point where privacy variance is no larger than sampling variance is $n=100$ for $\eps=5$, $n=500$ for $\eps=2$, $n=5{,}000$ for $\eps=1$, and $n=10{,}000$ for $\eps=0.5$.
At large $n$, all DP curves approach the non-private $1/n$ baseline, confirming that privacy overhead vanishes asymptotically.
The crossover ordering matches theory: larger $\eps$ shifts the transition to smaller $n$, consistent with both Theorem~\ref{thm:clt} and Theorem~\ref{thm:lb}.

\begin{figure}[t]
  \centering
  \includegraphics[width=.9\linewidth]{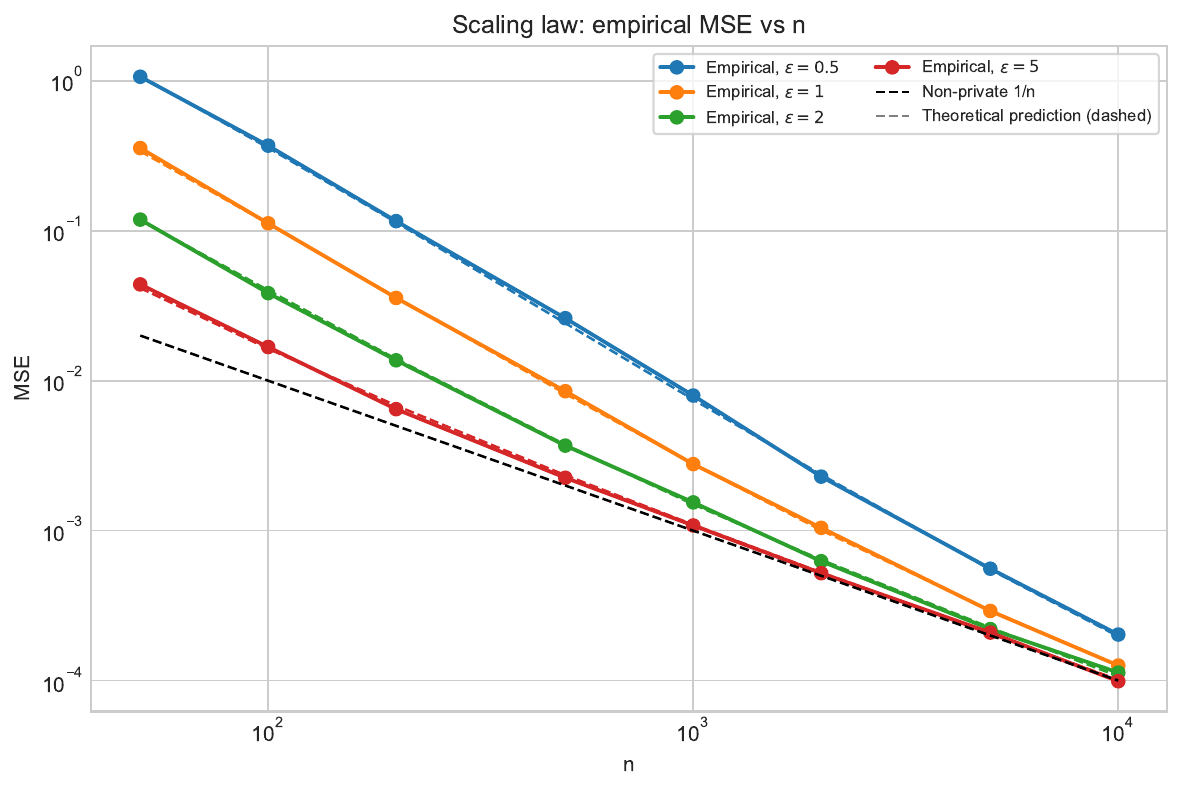}
  \caption{Log-log MSE scaling for Gaussian mean estimation. Solid lines are empirical MSE for each $\eps$; dashed color-matched lines are theoretical predictions; the thin black dashed line is non-private $1/n$. The observed two-regime behavior matches the predicted $1/n + 1/(n^2\eps^2)$ structure.}
  \label{fig:scaling_law}
\end{figure}

\paragraph{Experiment 5: type-I error and power.}
Figure~\ref{fig:power_curves} reports rejection rates for Wald tests across alternatives $\Delta$ and privacy levels.
At $\Delta=0$ and $\eps=1.0$, type-I error is $0.0385$ for DP plug-in/noise-aware, $0.0515$ for non-private, and $0.292$ for naive synthetic.
Across privacy levels, naive synthetic remains anti-conservative under the null (type-I error from $0.488$ at $\eps=0.5$ to $0.184$ at $\eps=5.0$).

As $\Delta$ increases, all methods gain power; for example at $\eps=1.0$ and $\Delta=0.1$, DP power is $0.481$ versus $0.873$ non-private.
By $\Delta=0.2$, DP power reaches $0.967$ and non-private reaches $1.000$.
The expected privacy-power trade-off is visible at lower $\eps$, where calibrated methods pay power to maintain valid size.
Thus, proper noise calibration delivers valid tests while preserving competitive sensitivity to true effects.

\begin{figure}[t]
  \centering
  \includegraphics[width=\linewidth]{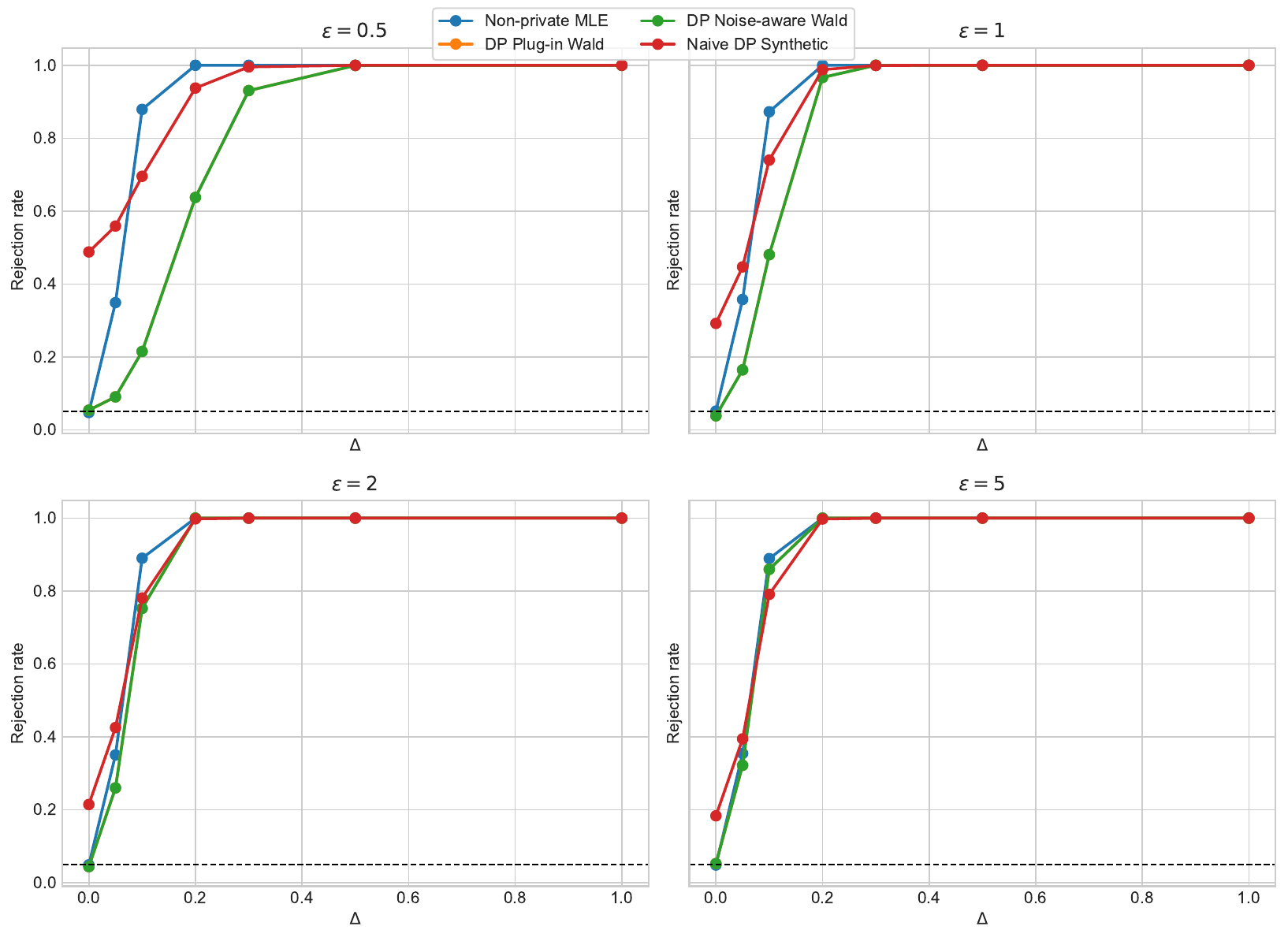}
  \caption{Power curves and type-I error diagnostics for Gaussian Wald tests across four privacy levels. At $\Delta=0$, calibrated DP methods remain near the nominal 0.05 level while naive synthetic testing is inflated. Power increases with effect size and approaches non-private behavior as $\eps$ grows.}
  \label{fig:power_curves}
\end{figure}

\paragraph{Experiment 6: real data (ACS/Folktables).}
Figure~\ref{fig:real_data_coverage} summarizes ACSIncome logistic-regression results over $(n,\eps)$ grids using the full-data MLE as ground truth.
Across all $(n,\eps)$ settings, mean coverage is $0.889$ for DP plug-in, $0.888$ for DP noise-aware, $0.880$ for DP bootstrap, and $0.510$ for naive synthetic.
The naive method is worst at $(n,\eps)=(500,0.5)$ with coverage $0.134$, while the private calibrated methods at that setting are all near $1.0$.

After the numerical stabilization in Section~\ref{sec:noiseaware}, CI lengths remain finite in all settings:
for DP plug-in they range from $0.127$ to $90.835$, for DP noise-aware from $0.127$ to $93.912$, and for DP bootstrap from $0.125$ to $137.185$.
Naive synthetic intervals are much shorter ($0.118$ to $2.735$) but substantially undercovered.
These results indicate that the sufficient-statistic release pipeline is practical beyond idealized synthetic settings.

\begin{figure}[t]
  \centering
  \includegraphics[width=\linewidth]{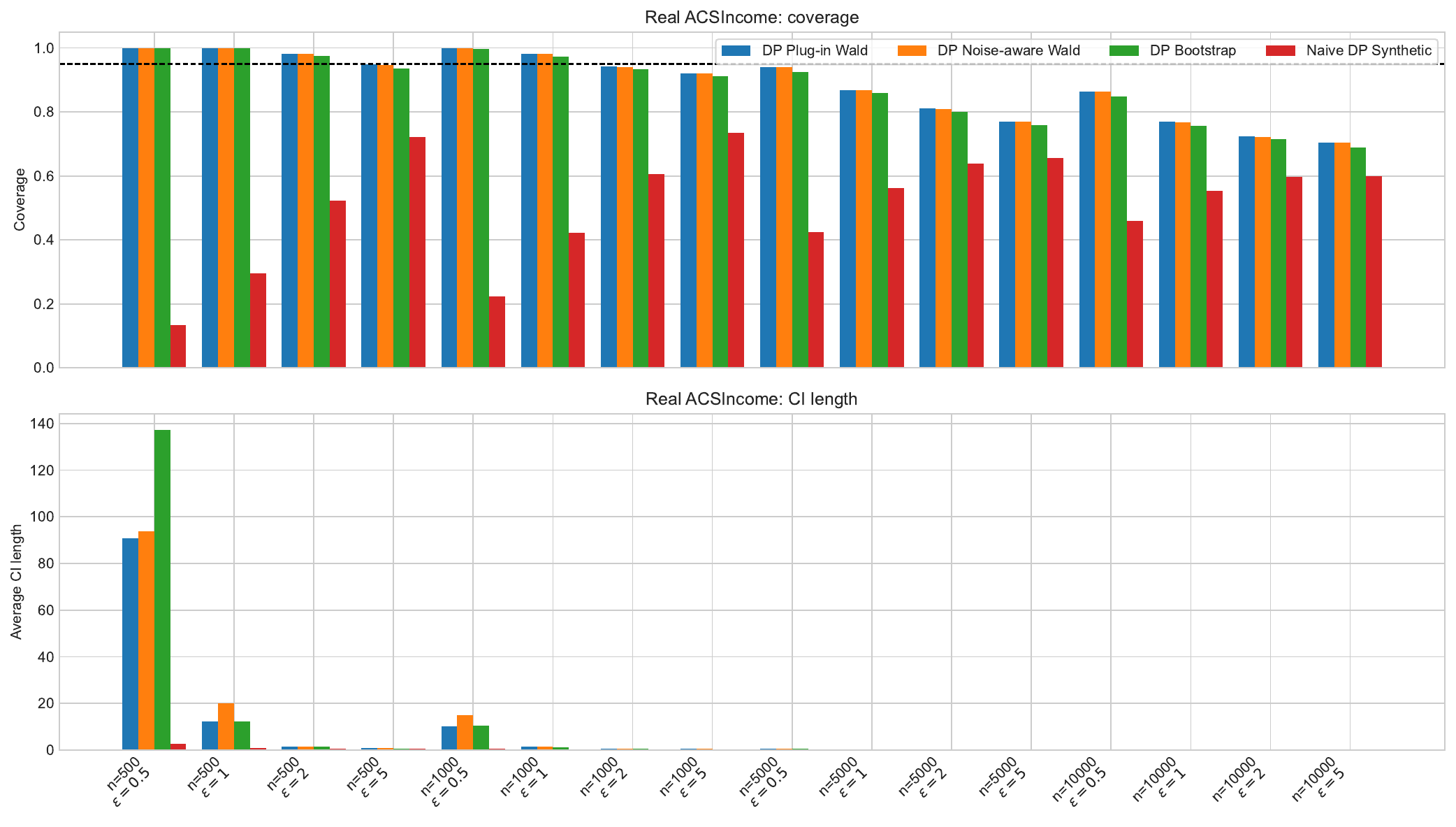}
  \caption{Real-data ACSIncome evaluation across $(n,\eps)$ configurations. The top panel reports average coefficient coverage and the bottom panel reports average CI length. DP plug-in and noise-aware methods maintain substantially better calibration than naive synthetic analysis, with the largest gains in low-$n$, low-$\eps$ settings.}
  \label{fig:real_data_coverage}
\end{figure}

\paragraph{Experiment 7: synthetic-data inferential evaluation.}
Figure~\ref{fig:synthetic_eval} compares three analysis modes as synthetic sample size $n_{\mathrm{syn}}$ grows.
Direct inference from $\widetilde{\theta}$ has coverage $0.952$ at all ratios, while noise-aware synthetic analysis is $0.942,0.943,0.954,0.957$ for $n_{\mathrm{syn}}/n=1,5,10,50$.
Naive synthetic analysis is severely undercovered at every ratio: $0.685,0.391,0.282,0.136$.

Increasing $n_{\mathrm{syn}}$ reduces only synthetic Monte Carlo noise, not the privacy distortion inherited from $\widetilde{\theta}$.
Thus synthetic sample size alone cannot fix inferential miscalibration caused by ignoring privacy noise.

\begin{figure}[t]
  \centering
  \includegraphics[width=.85\linewidth]{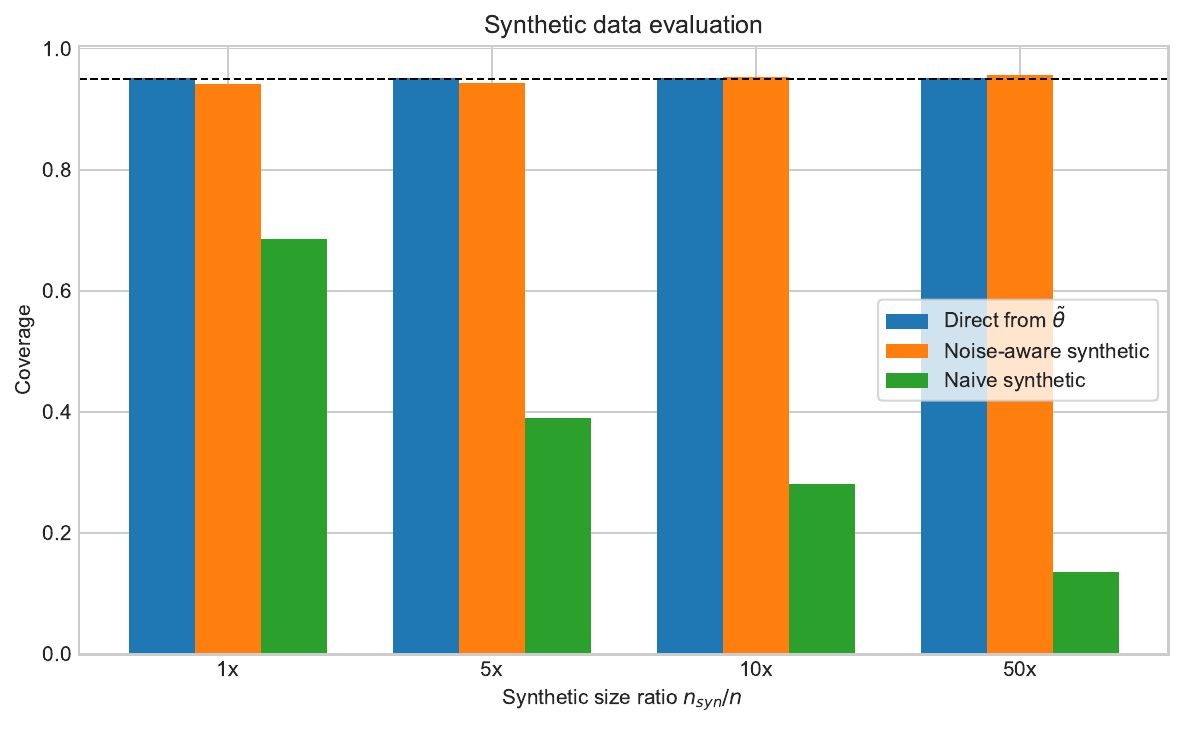}
  \caption{Coverage comparison for direct DP inference, noise-aware synthetic analysis, and naive synthetic analysis across synthetic sample-size ratios. Direct and noise-aware approaches remain near nominal coverage, while naive analysis systematically undercovers. Larger synthetic datasets do not eliminate the need for privacy-aware uncertainty correction.}
  \label{fig:synthetic_eval}
\end{figure}

\section{Discussion and Extensions}\label{sec:discussion}

\paragraph{Experimental takeaways.}
Across all experiments, the variance inflation formula from Theorem~\ref{thm:clt} is highly accurate, calibrated DP inference strongly outperforms naive synthetic-data analysis, and plug-in and noise-aware estimators behave similarly outside extreme finite-sample regimes, consistent with Proposition~\ref{prop:equiv}.

\paragraph{Beyond exponential families.}
The sufficient-statistic release idea extends to M-estimation and the generalized method of moments, where ``sufficient statistics'' become empirical moment conditions $n^{-1}\sum_i \psi(X_i;\theta)$ that can be privatized and used for noise-aware inference, connecting naturally to workload-based synthesis and causal inference pipelines.

\paragraph{Noise shaping and matrix mechanisms.}
For high-dimensional $d$, isotropic Gaussian noise may be suboptimal.
Matrix mechanisms \citep{li2015matrix} can reduce variance in target directions by releasing a linear transform $M\bar S$ with noise calibrated to the sensitivity of $M\bar S$; extending our theory to this setting is straightforward, requiring only a modified sensitivity analysis for the transformed query $M\bar S$.

\paragraph{Composition with DP-SGD.}
When $B$ is unknown a priori and data-dependent clipping is used (as in differentially private stochastic gradient descent, DP-SGD), the sensitivity analysis becomes more involved; investigating this interaction is an important practical direction.

\paragraph{Multivariate confidence regions.}
Our Wald intervals are coordinate-wise; simultaneous ellipsoidal regions can be constructed from the same variance formula and are left for future work.

\paragraph{Limitations.}
Our pipeline requires a regular exponential family with known sufficient statistics; extending to semiparametric or nonparametric models needs different tools.
Wald CIs can undercover at low $\eps$ for convex-link models (e.g., Poisson) due to Jensen's inequality inflating $I(\widetilde\theta)$; bootstrap CIs may help.
The clipping radius $B$ trades off clipping bias against noise magnitude, closely related to bias--accuracy--privacy trilemmas \citep{kamath2025trilemma}.

\begin{acknowledgements}
\end{acknowledgements}

\bibliography{references}

@inproceedings{dwork2006calibrating,
  title        = {Calibrating Noise to Sensitivity in Private Data Analysis},
  author       = {Dwork, Cynthia and McSherry, Frank and Nissim, Kobbi and Smith, Adam},
  booktitle    = {Theory of Cryptography Conference (TCC)},
  pages        = {265--284},
  year         = {2006},
  publisher    = {Springer}
}

@book{dwork2014foundations,
  title        = {The Algorithmic Foundations of Differential Privacy},
  author       = {Dwork, Cynthia and Roth, Aaron},
  year         = {2014},
  publisher    = {Now Publishers},
  series       = {Foundations and Trends in Theoretical Computer Science},
  volume       = {9},
  pages        = {211--407}
}

@inproceedings{balle2018improving,
  title        = {Improving the {G}aussian Mechanism for Differential Privacy: Analytical Calibration and Optimal Denoising},
  author       = {Balle, Borja and Wang, Yu{-}Xiang},
  booktitle    = {Proceedings of the 35th International Conference on Machine Learning (ICML)},
  series       = {Proceedings of Machine Learning Research},
  volume       = {80},
  pages        = {403--412},
  year         = {2018},
  publisher    = {PMLR},
}

@article{mckenna2021nist,
  title        = {Winning the {NIST} Contest: A Scalable and General Approach to Differentially Private Synthetic Data},
  author       = {McKenna, Ryan and Miklau, Gerome and Sheldon, Daniel},
  journal      = {Journal of Privacy and Confidentiality},
  volume       = {11},
  number       = {3},
  year         = {2021}
}

@article{tao2022benchmark,
  title        = {Benchmarking Differentially Private Synthetic Data Generation Algorithms},
  author       = {Tao, Yuchao and McKenna, Ryan and Hay, Michael and Machanavajjhala, Ashwin and Miklau, Gerome},
  journal      = {arXiv preprint arXiv:2112.09238},
  year         = {2022}
}

@article{mckenna2022aim,
  title        = {{AIM}: An Adaptive and Iterative Mechanism for Differentially Private Synthetic Data},
  author       = {McKenna, Ryan and Mullins, Brett and Sheldon, Daniel and Miklau, Gerome},
  journal      = {Proceedings of the VLDB Endowment},
  volume       = {15},
  number       = {11},
  pages        = {2599--2612},
  year         = {2022},
  doi          = {10.14778/3551793.3551817}
}

@inproceedings{raisa2023noiseaware,
  title        = {Noise-Aware Statistical Inference with Differentially Private Synthetic Data},
  author       = {R{\"a}is{\"a}, Ossi and J{\"a}lk{\"o}, Joonas and Kaski, Samuel and Honkela, Antti},
  booktitle    = {Proceedings of The 26th International Conference on Artificial Intelligence and Statistics (AISTATS)},
  series       = {Proceedings of Machine Learning Research},
  volume       = {206},
  pages        = {3620--3643},
  year         = {2023},
  publisher    = {PMLR}
}

@inproceedings{ferrando2022bootstrap,
  title        = {Parametric Bootstrap for Differentially Private Confidence Intervals},
  author       = {Ferrando, Cecilia and Wang, Shufan and Sheldon, Daniel},
  booktitle    = {Proceedings of The 25th International Conference on Artificial Intelligence and Statistics (AISTATS)},
  series       = {Proceedings of Machine Learning Research},
  volume       = {151},
  pages        = {1598--1618},
  year         = {2022},
  publisher    = {PMLR}
}

@inproceedings{alabi2022hypothesis,
  title        = {Hypothesis Testing for Differentially Private Linear Regression},
  author       = {Alabi, Daniel and Vadhan, Salil},
  booktitle    = {Advances in Neural Information Processing Systems},
  year         = {2022}
}

@inproceedings{bernstein2018dpbayes,
  title        = {Differentially Private Bayesian Inference for Exponential Families},
  author       = {Bernstein, Garrett and Sheldon, Daniel},
  booktitle    = {Advances in Neural Information Processing Systems},
  volume       = {31},
  pages        = {2924--2934},
  year         = {2018}
}

@inproceedings{donhauser2024privpgd,
  title        = {Privacy-Preserving Data Release Leveraging Optimal Transport and Particle Gradient Descent},
  author       = {Donhauser, Konstantin and Abad, Javier and Hulkund, Neha and Yang, Fanny},
  booktitle    = {Proceedings of the 41st International Conference on Machine Learning (ICML)},
  year         = {2024}
}

@article{perez2024mwutest,
  title        = {Does Differentially Private Synthetic Data Lead to Synthetic Discoveries?},
  author       = {Perez, Ileana Montoya and Movahedi, Parisa and Nieminen, Valtteri and Airola, Antti and Pahikkala, Tapio},
  journal      = {Methods of Information in Medicine},
  year         = {2024},
  doi          = {10.1055/a-2385-1355},
  note         = {Published online Sep 9, 2024}
}

@article{wang2025dpbootstrap,
  title        = {Differentially Private Bootstrap: New Privacy Analysis and Inference Strategies},
  author       = {Wang, Zhanyu and Cheng, Guang and Awan, Jordan},
  journal      = {Journal of Machine Learning Research},
  volume       = {26},
  pages        = {1--57},
  year         = {2025}
}

@article{covington2025unbiased,
  title        = {Unbiased Statistical Estimation and Valid Confidence Intervals under Differential Privacy},
  author       = {Covington, Christian and He, Xi and Honaker, James and Kamath, Gautam},
  journal      = {Statistica Sinica},
  volume       = {35},
  pages        = {651--670},
  year         = {2025}
}

@inproceedings{smith2011privacy,
  title        = {Privacy-Preserving Statistical Estimation with Optimal Convergence Rates},
  author       = {Smith, Adam},
  booktitle    = {Proceedings of the 43rd Annual ACM Symposium on Theory of Computing (STOC)},
  pages        = {813--821},
  year         = {2011},
  publisher    = {ACM}
}

@article{duchi2018minimax,
  title        = {Minimax Optimal Procedures for Locally Private Estimation},
  author       = {Duchi, John C. and Jordan, Michael I. and Wainwright, Martin J.},
  journal      = {Journal of the American Statistical Association},
  volume       = {113},
  number       = {521},
  pages        = {182--201},
  year         = {2018}
}

@article{cai2021cost,
  title        = {The Cost of Privacy: Optimal Rates of Convergence for Parameter Estimation with Differential Privacy},
  author       = {Cai, T. Tony and Wang, Yichen and Zhang, Linjun},
  journal      = {Annals of Statistics},
  volume       = {49},
  number       = {5},
  pages        = {2825--2850},
  year         = {2021}
}

@article{wasserman2010statistical,
  title        = {A Statistical Framework for Differential Privacy},
  author       = {Wasserman, Larry and Zhou, Shuheng},
  journal      = {Journal of the American Statistical Association},
  volume       = {105},
  number       = {489},
  pages        = {375--389},
  year         = {2010},
  doi          = {10.1198/jasa.2009.tm08651}
}

@inproceedings{karwa2018finite,
  title        = {Finite Sample Differentially Private Confidence Intervals},
  author       = {Karwa, Vishesh and Vadhan, Salil P.},
  booktitle    = {9th Innovations in Theoretical Computer Science Conference (ITCS)},
  series       = {LIPIcs},
  volume       = {94},
  pages        = {44:1--44:9},
  year         = {2018},
  publisher    = {Schloss Dagstuhl - Leibniz-Zentrum f{\"u}r Informatik},
  doi          = {10.4230/LIPIcs.ITCS.2018.44}
}

@article{karwa2016noisydegrees,
  title        = {Inference using noisy degrees: Differentially private {$\\beta$}-model and synthetic graphs},
  author       = {Karwa, Vishesh and Slavkovi{\'c}, Aleksandra},
  journal      = {The Annals of Statistics},
  volume       = {44},
  number       = {1},
  year         = {2016},
  doi          = {10.1214/15-AOS1358}
}

@article{avellamedina2023noisyopt,
  title        = {Differentially private inference via noisy optimization},
  author       = {Avella-Medina, Marco and Bradshaw, James and Loh, Po-Ling},
  journal      = {The Annals of Statistics},
  volume       = {51},
  number       = {5},
  year         = {2023},
  doi          = {10.1214/23-AOS2321}
}

@article{dong2022gdp,
  title        = {Gaussian Differential Privacy},
  author       = {Dong, Jinshuo and Roth, Aaron and Su, Weijie J.},
  journal      = {Journal of the Royal Statistical Society Series B: Statistical Methodology},
  volume       = {84},
  number       = {1},
  pages        = {3--37},
  year         = {2022},
  doi          = {10.1111/rssb.12454}
}

@article{awan2025simbainference,
  title        = {Simulation-Based, Finite-Sample Inference for Privatized Data},
  author       = {Awan, Jordan and Wang, Yuguo},
  journal      = {Journal of the American Statistical Association},
  volume       = {120},
  number       = {551},
  pages        = {1669--1682},
  year         = {2025},
  doi          = {10.1080/01621459.2024.2427436}
}

@article{kamath2025trilemma,
  title        = {A Bias-Accuracy-Privacy Trilemma for Statistical Estimation},
  author       = {Kamath, Gautam and Mouzakis, Argyris and Regehr, Matthew and Singhal, Vatsal},
  journal      = {Journal of the American Statistical Association},
  volume       = {120},
  number       = {552},
  pages        = {2338--2349},
  year         = {2025},
  doi          = {10.1080/01621459.2024.2443275}
}

@inproceedings{awan2018dpuf,
  title        = {Differentially Private Uniformly Most Powerful Tests for Binomial Data},
  author       = {Awan, Jordan and Slavkovi{\'c}, Aleksandra},
  booktitle    = {Advances in Neural Information Processing Systems},
  volume       = {31},
  year         = {2018}
}

@inproceedings{ding2021retiring,
  title        = {Retiring Adult: New Datasets for Fair Machine Learning},
  author       = {Ding, Frances and Hardt, Moritz and Miller, John and Schmidt, Ludwig},
  booktitle    = {Advances in Neural Information Processing Systems},
  volume       = {34},
  pages        = {6478--6490},
  year         = {2021}
}

@article{li2015matrix,
  title        = {The Matrix Mechanism: Optimizing Linear Counting Queries under Differential Privacy},
  author       = {Li, Chao and Miklau, Gerome and Hay, Michael and McGregor, Andrew and Rastogi, Vibhor},
  journal      = {The VLDB Journal},
  volume       = {24},
  number       = {6},
  pages        = {757--781},
  year         = {2015}
}

\newpage
\onecolumn

\title{Noise-Calibrated Inference from Differentially Private Sufficient Statistics in Exponential Families\\(Supplementary Material)}
\maketitle

\appendix

\vspace{-1em}
\section{Proof of Theorem~\ref{thm:clt} (Asymptotic Distribution and Variance Inflation)}
\label{app:proof_thm2}

\begin{proof}
Recall that $\mu(\theta):=\nabla A(\theta)=\E_\theta[s(X)]$ is the mean parameter map, and $(\nabla A)^{-1}$ is its inverse.
We decompose:
\[
\widetilde{\theta}_{\mathrm{plug}} - \theta_0
= (\nabla A)^{-1}(\widetilde{\bar S}) - \theta_0
= (\nabla A)^{-1}(\bar S + Z) - \theta_0.
\]
\emph{Step 1 (sampling term).}
By the law of large numbers, $\bar S \pto \mu(\theta_0)$.
By the multivariate CLT applied to $s(X_1),\ldots,s(X_n)$,
\[
\sqrt{n}\big(\bar S - \mu(\theta_0)\big) \dto \N\big(0, \Cov_{\theta_0}[s(X)]\big) = \N(0, I(\theta_0)),
\]
where the last equality uses the identity $\Cov_{\theta_0}[s(X)] = \nabla^2 A(\theta_0) = I(\theta_0)$ for minimal exponential families.
Since $(\nabla A)^{-1}$ is differentiable at $\mu(\theta_0)$ with Jacobian $I(\theta_0)^{-1}$, the delta method gives
$\sqrt{n}(\hat\theta - \theta_0) \dto \N(0, I(\theta_0)^{-1})$.

\emph{Step 2 (privacy term).}
By a first-order Taylor expansion of $(\nabla A)^{-1}$ around $\bar S$,
\[
\widetilde{\theta}_{\mathrm{plug}} - \hat\theta
= (\nabla A)^{-1}(\bar S + Z) - (\nabla A)^{-1}(\bar S)
= I(\hat\theta)^{-1} Z + O_p(\norm{Z}^2).
\]
Since $Z\sim\N(0,\sigma^2 I_d)$, we have $\norm{Z}=O_p(\sigma)$ and thus the remainder term is $O_p(\sigma^2)=o_p(\sigma)$.
Because $I(\hat\theta) \pto I(\theta_0)$, we have $\widetilde{\theta}_{\mathrm{plug}} - \hat\theta = I(\theta_0)^{-1} Z + o_p(\sigma)$.

\emph{Step 3 (combining).}
The sampling term $\hat\theta - \theta_0$ depends only on $\{X_i\}$, while $Z$ is independent of the data.
Therefore the covariance of the two terms is zero, and
\[
\sqrt{n}(\widetilde{\theta}_{\mathrm{plug}} - \theta_0)
\dto \N\!\Big(0,\; I(\theta_0)^{-1} + n\sigma^2 I(\theta_0)^{-2}\Big).
\]
The rate decomposition follows from the fact that, under Gaussian-mechanism calibration, $\sigma$ is proportional to the sensitivity $\Delta_2=2B/n$ (up to $\log(1/\del)$ factors), yielding the $O_p((n\eps)^{-1})$ privacy distortion rate for bounded problems.
\end{proof}

\section{Proof of Proposition~\ref{prop:equiv} (First-Order Equivalence)}
\label{app:proof_prop1}

\begin{proof}
The plug-in estimator solves $\nabla A(\theta) = \widetilde{\bar S}$, while the noise-aware estimator minimizes the weighted quadratic
$Q(\theta) = (\widetilde{\bar S} - \nabla A(\theta))^\top \Sigma(\theta)^{-1} (\widetilde{\bar S} - \nabla A(\theta))$
with $\Sigma(\theta) = I(\theta)/n + \sigma^2 I$.
The noise-aware first-order condition is
$\nabla A(\theta)^\top [\nabla^2 A(\theta)] \Sigma(\theta)^{-1} (\widetilde{\bar S} - \nabla A(\theta)) = 0$.
At the plug-in solution, $\widetilde{\bar S} - \nabla A(\widetilde{\theta}_{\mathrm{plug}}) = 0$, which also sets this expression to zero.
Hence both estimators solve the same equation at leading order.

More precisely, expand $Q(\theta)$ around $\widetilde{\theta}_{\mathrm{plug}}$.
The difference $\widetilde{\theta}_{\mathrm{NA}} - \widetilde{\theta}_{\mathrm{plug}}$ satisfies a Newton-type update whose magnitude is controlled by
$\norm{\Sigma(\widetilde{\theta}_{\mathrm{plug}})^{-1} - \Sigma(\theta_0)^{-1}} = O_p(n^{-1/2} + \sigma)$
times lower-order terms.
By standard M-estimation arguments, this yields $\widetilde{\theta}_{\mathrm{NA}} - \widetilde{\theta}_{\mathrm{plug}} = o_p(n^{-1/2}) + o_p(\sigma)$.
\end{proof}

\section{Proof of Theorem~\ref{thm:lb} (Lower Bound)}
\label{app:proof_thm3}

\begin{proof}
Let $X\in\{-1,+1\}$ and write the mean parameter as $\mu(\theta)=\E_\theta[X]=\tanh(\theta)$.
Let $\widehat\theta$ be any $(\eps,\del)$-DP estimator of $\theta$ based on $n$ i.i.d.\ samples and define the post-processed mean estimator $\widehat\mu:=\tanh(\widehat\theta)$.
By post-processing invariance, $\widehat\mu$ is also $(\eps,\del)$-DP.
Since $\tanh$ is 1-Lipschitz, we have the pointwise inequality $(\widehat\mu-\mu(\theta))^2 \le (\widehat\theta-\theta)^2$ and therefore
\[
\E_\theta\!\left[(\widehat\theta-\theta)^2\right]
\;\ge\;
\E_\theta\!\left[(\widehat\mu-\mu(\theta))^2\right].
\]

It therefore suffices to lower bound the minimax mean-estimation risk under central $(\eps,\del)$-DP.
The minimax lower bounds of \citet{cai2021cost} for bounded mean estimation imply that, for $\eps\in(0,1]$ and $\del<1/4$,
\[
\inf_{\widehat{\mu}\ \text{$(\eps,\del)$-DP}} \ \sup_{\mu\in[-1,1]}
\E\!\left[(\widehat{\mu}-\mu)^2\right]
\;\ge\;
\frac{c}{n^2\eps^2}
\]
for a universal constant $c>0$.
Applying this bound to our Bernoulli family and combining with the inequality above yields the stated lower bound for $\theta$.
\end{proof}

\section{Additional Experimental Details}
\label{app:experiment_details}

\paragraph{Replication counts.}
All simulation experiments use sufficient replications for stable Monte Carlo estimates: Experiment~1 uses 5{,}000 replications, Experiments~2--3 use 1{,}000, Experiments~4--5 use 2{,}000, Experiment~6 (real data) uses 500, and Experiment~7 uses 1{,}000.
The DP parametric bootstrap uses $B_{\mathrm{boot}}=500$ draws in simulation experiments and $B_{\mathrm{boot}}=200$ in the real-data experiment.

\paragraph{Runtime.}
On a single machine with 8 CPU cores, the full experiment suite (Experiments~1--7 with all models and configurations) runs in approximately 4--6 hours.
Individual experiments range from 2 minutes (Experiment~1, Gaussian only) to 90 minutes (Experiment~6, real data with 500 replications across 16 $(n,\eps)$ configurations).
The noise-aware estimator adds negligible overhead compared to plug-in, since both require a single optimization step per replication.

\paragraph{Software.}
All experiments are implemented in Python using NumPy and SciPy.
The ACS data in Experiment~6 are accessed via the Folktables package \citep{ding2021retiring}.

\end{document}